\documentclass{article}
\usepackage[nonatbib,preprint]{neurips_2026}
\usepackage[numbers,sort&compress]{natbib}

\usepackage[utf8]{inputenc}
\usepackage[T1]{fontenc}
\usepackage{hyperref}
\usepackage{url}
\usepackage{booktabs}
\usepackage{amsfonts}
\usepackage{amsmath}
\usepackage{amssymb}
\usepackage{amsthm}
\usepackage{microtype}
\usepackage{xcolor}
\usepackage{graphicx}
\usepackage{wrapfig}
\usepackage{float}
\usepackage{subcaption}
\usepackage{etoolbox}
\usepackage{algpseudocode}          
\floatstyle{ruled}                  
\newfloat{algorithm}{htbp}{loa}
\floatname{algorithm}{Algorithm}

\newtheorem{corollary}{Corollary}
\newtheorem{proposition}{Proposition}
\newtheorem{assumption}{Assumption}
\newtheorem{definition}{Definition}

\title{When Attribution Patching Lies:\\Diagnosis and a Second-Order Correction}

\author{
  Luyang Zhang$^{1}$ \&
  Jialu Wang$^{2}$ \\
  $^{1}$Carnegie Mellon University \\
  $^{2}$University of California, Santa Cruz \\
  \texttt{luyangz@cmu.edu, faldict@ucsc.edu}
}

\begin{document}

\maketitle

\begin{abstract}
A central goal of mechanistic interpretability is to identify which internal components causally drive a language model's behavior. Because these importance estimates serve as the evidence for identifying circuits, systematic errors can lead to the misidentification of the underlying mechanisms. While activation patching provides a gold-standard causal metric, its computational cost is prohibitive at scale. Practitioners instead rely on attribution patching, a gradient-based, first-order approximation whose reliability remains poorly understood. In this work, we characterize the source of this unreliability, demonstrating that the dominant error stems from the non-linearities in the downstream network rather than local curvature at the patched component. This insight yields three practical tools: (i) a reliability score to detect untrustworthy estimates, (ii) error bounds quantifying potential attribution mis-specifications, and (iii) a Hessian-vector-product (HVP) correction that eliminates the leading-order error with only one additional backward pass. In evaluations across five model families (124M--9B parameters) and both random-token and naturalistic (name-swap) perturbations, HVP is the only second-order correction feasible at larger scale, where standard baselines like Integrated Gradients become computationally prohibitive. In comparative experiments, a multi-step HVP variant matches or exceeds the accuracy of Integrated Gradients at significantly lower compute, outperforming prior second-order baselines. These improvements lead to higher-fidelity circuit recovery on standard benchmarks and support a \emph{Screen-Flag-Fix} workflow that targets computational effort only toward the components flagged as unreliable.
\end{abstract}

\section{Introduction}
\label{sec:intro}

As language models grow in scale and capability, understanding their internal mechanisms becomes increasingly important.
Mechanistic interpretability seeks to provide this understanding by explaining model behavior in terms of internal computations. A central step in that agenda is causal localization: identifying which attention heads, neurons, or features causally drive a given behavior. These localization scores are often used to support circuit claims~\citep{conmy2023acdc, marks2025sparse, anthropic2025circuits}, so systematic error can lead to incorrect mechanistic conclusions. The gold-standard causal test is \emph{activation patching}~\citep{elhage2021mathematical, wang2023ioi}, which replaces a component's activation under a corrupted input with the clean-input value and measures the output change. But its cost scales linearly with the number of components, quickly becoming prohibitive in modern Large Language Models (LLMs). In practice, broad localization therefore relies on cheaper approximations, with direct interventions reserved for a short list of components.

A commonly used approximation is \emph{attribution patching}~\citep{nanda2023ap}, which replaces many explicit interventions with a single backward pass by linearizing the effect around a corrupted activation. However, this linearization can be unreliable when dowstream nonlinearities are strong. Prior work has identified concrete failure modes and proposed partial remedies~\citep{kramar2024atp, edin2025gim, relp2025}, yet a fundamental question remains: for a given component, when is attribution patching reliable, how large can its error be, and how should it be corrected?

\begin{figure}[t]
\centering
\includegraphics[width=\linewidth]{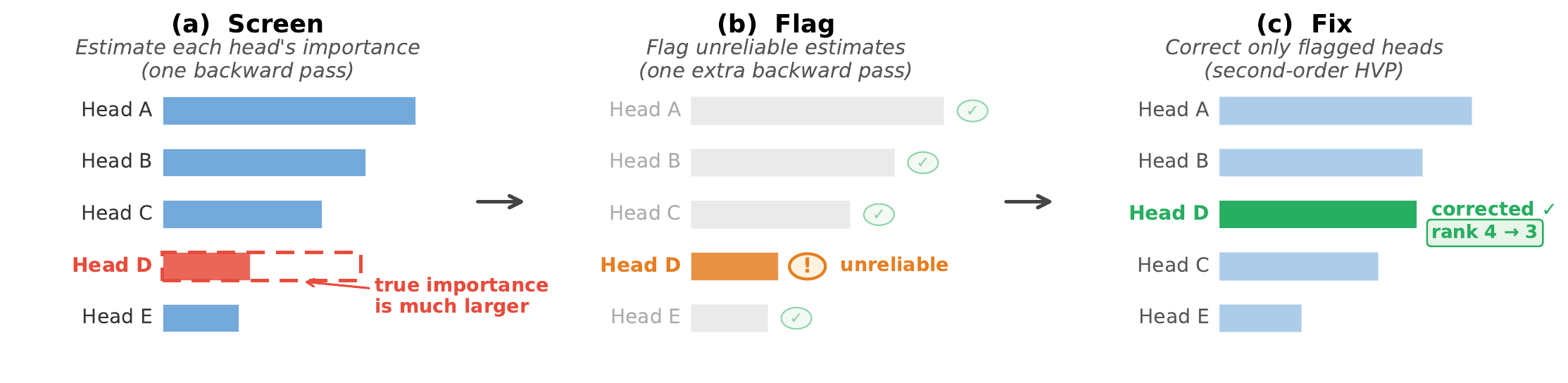}
\caption{\textbf{Screen--Flag--Fix pipeline} for reliable attribution patching. (a)
Attribution patching screens all heads cheaply; (b) a reliability score flags
suspect estimates; (c) HVP corrects only the flagged heads, recovering the
true ranking.}
\label{fig:framework}
\end{figure}

We address this gap by analyzing the structure of attribution patching error. The key finding is that the dominant error stems from the network's response to the intervention, rather than from the local nonlinearity at the patched component. A second-order Taylor expansion makes this precise: the error splits into a leading quadratic term, computable from a single Hessian-vector product (HVP), plus a cubic remainder. This decomposition yields three key results: (1) a reliability score for flagging unreliable estimates; (2) error bounds with a provable $1/K^2$ convergence rate, where $K$ is the number of sub-steps a correction is split into; and (3) an explanation for the insufficiency of prior fixes~\citep{kramar2024atp, edin2025gim}. Specifically, while prior methods use local curvature, we show that the true error depends on the downstream network response, a quantity that differs from local metrics by $2$--$66\times$ with a near-zero correlation.

Building on this analysis, we propose a \emph{Screen--Flag--Fix} pipeline (Figure~\ref{fig:framework}): screen all components with attribution patching, flag unreliable ones via the reliability score, and correct only those with an HVP-based second-order fix. For large perturbations where the single-step expansion overshoots, a multi-step variant (MS-HVP) splits the correction into $K$ sub-steps, each evaluated locally. We evaluate across five model families (124M--9B parameters), comparing against Integrated Gradients (IG)~\citep{sundararajan2017ig}, Integrated Hessians (IH)~\citep{janizek2021hessians}, and GIM~\citep{edin2025gim}. \emph{At larger scale} (8B+ parameters), IG requires ${\sim}25$ GPU-days per task; HVP is the only second-order correction demonstrated at this scale, reducing error by up to $82\%$ (MS-HVP $K{=}5$ on Llama-3.1-8B). \emph{At smaller scales}, MS-HVP matches or exceeds IG: on the hardest setting, MS-HVP at cost~10 achieves better accuracy than IG at cost~35 ($3.5\times$ cheaper, $p = 0.022$), and at matched cost MS-HVP wins by $1.2$~pp ($p < 0.001$).
HVP also outperforms IH by $1.5\%$--$13.9\%$ across all nine tasks at smaller scales.
\emph{For circuit recovery}, these per-component gains translate to improved head recovery on IOI and Greater-Than benchmarks.

Our contributions are three-fold:
\setlength{\leftmargini}{1.8em}
\begin{itemize}
\setlength{\itemsep}{0.6em}
\setlength{\parsep}{0pt}
  \item \textbf{A reframe of attribution patching reliability.} We trace the dominant error to the downstream network response, not local nonlinearity, explaining why prior local fixes are structurally incomplete.

  \item \textbf{A scalable second-order correction.} HVP (and its multi-step variant MS-HVP) remains tractable at 8B+ parameters where existing refinement baselines become infeasible, the first second-order correction demonstrated at this scale.

  \item \textbf{A design space for attribution-patching refinement.} We organize prior methods by their goals, including estimation accuracy, feature interactions, and circuit faithfulness, clarifying which tool fits which use case.
\end{itemize}

\section{Related work}
\label{sec:related}

\paragraph{Patching methods for mechanistic localization.}
Activation patching~\citep{elhage2021mathematical, wang2023ioi} provides the
causal reference intervention for mechanistic localization by measuring the
effect of intervening on hidden activations.  Because this cost scales linearly
with the number of candidate components, attribution
patching~\citep{nanda2023ap} replaces many interventions with a first-order
approximation and has become a standard localization primitive in automated
circuit discovery~\citep{conmy2023acdc, syed2024eap}, sparse feature
circuits~\citep{marks2025sparse}, and large-scale circuit
tracing~\citep{anthropic2025circuits}.  Practical guides such as
\citet{heimersheim2024howto} and methodological studies of activation-patching
metrics and corruption choices~\citep{zhang2023bestpractices} note that
patching often behaves approximately linearly while remaining sensitive to
nonlinearities and design decisions, making attribution-patching reliability central to
patching-based workflows.
Recent large-scale circuit tracing~\citep{anthropic2025circuits,
anthropic2026emotions} applies attribution to SAE
features~\citep{marks2025sparse} rather than raw heads or neurons; HVP is
directly applicable to any differentiable activation, though the SAE encoder's
nonlinearity (ReLU or top-$k$) introduces an additional curvature source whose
magnitude we leave to future work.

\paragraph{Attribution-patching reliability and circuit validation.}
Several failure modes of attribution patching are known:
\citet{kramar2024atp} identify activation-region mismatch and cancellation;
\citet{edin2025gim} show that softmax redistribution systematically biases gradient-based localization; \citet{meloux2025variance} document instability under prompt and hyperparameter variation;
\citet{bereska2025open} identify gradient-attribution error as a standing open problem.
Alternative attribution rules such as RelP~\citep{relp2025} and
EAP-GP~\citep{zhang2025eapgp} improve faithfulness in circuit discovery.
A complementary line of work evaluates recovered circuits against known causal
structure: \citet{shi2024circuithypothesis} formalize faithfulness and minimality tests,
\citet{mueller2025mib} and \citet{gupta2024interpbench} benchmark localization
methods, and formal mechanistic-interpretability
work~\citep{formal2026mechinterp, certified2026circuits} studies provable robustness.
Generally, these efforts clarify failure modes and evaluation criteria but do not provide a general account of when attribution patching is numerically reliable or how to correct it across component types and architectures.

\paragraph{Causal abstraction and higher-order analysis.}
\citet{geiger2025causalabstraction} place mechanistic interpretability in a
broader causal-abstraction framework. Related work~\citep{geiger2024das,
wu2023alpaca} studies whether model adhere to an interpretable causal
structure, a question that remains orthogonal to the numerical reliability of the underlying attribution-patching estimates.
Separately, higher-order attribution methods, such as Integrated
Gradients~\citep{sundararajan2017ig}, Integrated Hessians~\citep{janizek2021hessians}, compositional curvature analyses~\citep{entesari2024curvature}, and influence-function
HVPs~\citep{koh2017influence}, show that network-level curvature is both
informative and tractable, but none provides a reliability account for
patching at internal activations.

\section{Diagnosing and correcting attribution-patching error}
\label{sec:method}

In this section, we analyze when attribution patching is unreliable and how to correct it.
The key idea is that attribution patching error can be decomposed into two parts: a dominant quadratic term (computable from a single Hessian-vector product) and a smaller cubic remainder. We set up this decomposition (\S\ref{sec:error_decomp}), use it to define a reliability score that flags unreliable attributions (\S\ref{sec:bounds}), and then show why local activation curvature cannot predict the dominant error term before introducing network-level HVP, MS-HVP, and selective correction (\S\ref{sec:hvp}).

\subsection{Problem setup and error decomposition}
\label{sec:error_decomp}

Causal localization compares two matched inputs that differ in a controlled way (e.g., swapping a name or replacing a token) and asks which internal components account for the resulting change in output. For a given component (neuron, attention head, or residual-stream position), we compare its activation $a$ under one input to its perturbed counterpart $a' = a + \delta$ under the other. Activation patching measures the true causal effect of this substitution by intervening directly, whereas attribution patching approximates it using a single backward pass. Our focus is on characterizing the error introduced by this approximation.

Patching a single component changes a scalar output metric as follows. Let $M: \mathbb{R}^d \to \mathbb{R}$ denote the scalar score read out from the model output as a function of the activation being patched (e.g., a logit difference or the log-probability of a target token), and let $a \in \mathbb{R}^d$ denote the activation at the component of interest. Throughout this local analysis, we treat the rest of the input context and
model computation as fixed. 
\emph{Activation patching} measures the true effect of replacing $a$ with the counterfactual $a' = a + \delta$, namely $\Delta := M(a + \delta) - M(a)$.  
\emph{Attribution patching} approximates this via a first-order Taylor expansion:
\begin{equation}
  \hat{\Delta} = \nabla_a M \cdot \delta\,.
  \label{eq:ap}
\end{equation}
Here $\nabla_a M$ is the gradient of the scalar metric with respect to the activation $a$, so $\hat{\Delta}$ is the first-order attribution-patching estimate along the patch direction $\delta$. The approximation error is $E = \Delta - \hat{\Delta}$.  By Taylor's theorem with integral remainder,
\begin{equation}
  E = \underbrace{\tfrac{1}{2}\,\delta^{\!\top} H \delta}_{\text{dominant (Hessian) term}} + \underbrace{\Phi(\delta)}_{\text{remainder}},
  \qquad
  H = \nabla_a^2 M(a).
  \label{eq:error_decomp}
\end{equation}
The quantity $\delta^{\!\top} H \delta$ is the Hessian quadratic form along the patching direction $\delta$, i.e., the second-order curvature of the scalar metric in the direction induced by the patch.  

To bound the higher-order terms in \eqref{eq:error_decomp}, we adopt a path-local Lipschitz-Hessian condition, the standard route to a cubic Taylor remainder in second-order optimization~\citep{nesterov2006cubic, cartis2011arc} and a common tool in higher-order analyses of neural networks and self-attention~\citep{janizek2021hessians, entesari2024curvature,jukic2025robustness, kim2021lipschitz, castin2024smooth}.  We use it only along the patch segment, not as a deployment-computable global constant.

\begin{assumption}[Local third-order smoothness]
\label{ass:local_smooth}
The scalar metric $M$ is twice continuously differentiable on a neighborhood of the line segment $\{a + t\delta : t \in [0,1]\}$, and there exists a finite constant $L_3$ such that
\begin{equation}
  \bigl\|\nabla_a^2 M(a + t\delta) - \nabla_a^2 M(a)\bigr\|_{\mathrm{op}}
  \leq L_3\, t \|\delta\|
  \qquad \text{for all } t \in [0,1].
  \label{eq:local_smooth}
\end{equation}
\end{assumption}

\paragraph{How the assumption applies to patching.}
The assumption is local: it applies only along the patch segment and yields the remainder bound $|\Phi(\delta)| \leq L_3 \|\delta\|^3 / 6$.  Standard transformer components satisfy it in the relevant regime: exact GeLU and SiLU have bounded third derivatives, softmax is smooth, and normalization layers are smooth away from zero-variance inputs. Empirically, we estimate $L_3$ along the patch path by probing the Hessian at three interpolation points (Appendix~\ref{app:l3-empirical}): the resulting cubic bound holds for $82\%$ of GPT-2 and $95\%$ of Gemma-2-2B component--prompt pairs, with median slack factors of $3.4\times$ and $7.6\times$ respectively.

\subsection{Reliability score and error bounds}
\label{sec:bounds}

We now turn the decomposition into a diagnostic.  The goal is to decide, before running activation patching, whether a first-order attribution-patching score is likely to be trustworthy for a given component. The quadratic term is computable
from one HVP, but its raw magnitude is hard to interpret without a scale; we therefore normalize it by the first-order estimate. The result is a relative reliability score that tracks attribution patching's relative error up to the cubic remainder and is used later to flag components for selective HVP/MS-HVP correction.

\begin{definition}[Reliability score]
\label{def:rscore}
For a component with Hessian $H = \nabla_a^2 M(a)$, perturbation $\delta$, and nonzero first-order estimate $\hat\Delta \neq 0$, the \emph{reliability score} is $\tilde{R} = |\delta^{\!\top} H \delta| / (2\,|\hat{\Delta}|)$.
\end{definition}

The denominator is chosen to match the practical question: whether the attribution patching estimate itself is stable: $\tilde R$ measures the leading omitted term relative to the first-order estimate. Normalizing by the true effect $\Delta$ would require activation patching, so it is not available as a screening diagnostic; in near-zero cases, the absolute error bounds from \eqref{eq:error_decomp} are the right object to inspect.

\begin{proposition}[Local attribution-patching error bound via the reliability score]
\label{thm:main}
Assume Assumption~\ref{ass:local_smooth}, $\hat{\Delta} \neq 0$, and $\delta^{\!\top}H\delta \neq 0$. Let $\alpha = \tfrac{L_3 \|\delta\|^3}{3\,|\delta^{\!\top} H \delta|}$ be the higher-order slack parameter.  Then $\bigl||E|/|\hat{\Delta}| - \tilde R\bigr| \leq \alpha \tilde R$. If additionally $\alpha < 1$, then $(1 - \alpha)\tilde{R} \leq |E|/|\hat{\Delta}|
\leq (1 + \alpha)\tilde{R}$.
\end{proposition}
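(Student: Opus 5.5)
The plan is to reduce the statement to a single scalar inequality on the remainder $\Phi(\delta)$ from \eqref{eq:error_decomp}, and then convert it into relative form by dividing by $|\hat\Delta|$.

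\emph{Step 1 (remainder bound).} Set $g(t) = M(a + t\delta)$. Assumption~\ref{ass:local_smooth} makes $g$ twice continuously differentiable on $[0,1]$, with $g'(0) = \nabla_a M\cdot\delta = \hat\Delta$ and $g''(t) = \delta^{\!\top}\nabla_a^2 M(a+t\delta)\,\delta$. Taylor's theorem with integral remainder gives
\[
g(1) - g(0) - g'(0) = \int_0^1 (1-t)\, g''(t)\,dt .
\]
Subtracting $\tfrac12\delta^{\!\top}H\delta = \int_0^1 (1-t)\,\delta^{\!\top}H\delta\,dt$ identifies
\[
\Phi(\delta) = \int_0^1 (1-t)\,\delta^{\!\top}\bigl(\nabla_a^2 M(a+t\delta) - H\bigr)\delta\,dt .
\]
Applying the operator-norm bound \eqref{eq:local_smooth} inside the integral yields $|\Phi(\delta)| \le L_3\|\delta\|^3\int_0^1 t(1-t)\,dt = L_3\|\delta\|^3/6$, which is the remainder bound quoted after the assumption.

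\emph{Step 2 (reverse triangle inequality).} From $E = \tfrac12\delta^{\!\top}H\delta + \Phi(\delta)$ we get
\[
\bigl||E| - \tfrac12|\delta^{\!\top}H\delta|\bigr| \le |\Phi(\delta)| \le \frac{L_3\|\delta\|^3}{6}.
\]
Divide by $|\hat\Delta| \ne 0$. The left side becomes $\bigl||E|/|\hat\Delta| - \tilde R\bigr|$ by Definition~\ref{def:rscore}. For the right side, use $\delta^{\!\top}H\delta \ne 0$ to write
\[
\frac{L_3\|\delta\|^3}{6|\hat\Delta|} = \frac{L_3\|\delta\|^3}{3|\delta^{\!\top}H\delta|}\cdot\frac{|\delta^{\!\top}H\delta|}{2|\hat\Delta|} = \alpha\tilde R .
\]
This gives the first claim. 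The two-sided bound $(1-\alpha)\tilde R \le |E|/|\hat\Delta| \le (1+\alpha)\tilde R$ follows by unpacking the absolute value. The condition $\alpha<1$ only serves to make the lower bound nontrivial, meaning positive; the inequality holds regardless.

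\emph{Main obstacle.} The only real care point is Step 1. One must be sure the remainder is expressed relative to the Hessian at the base point $a$, so that the Lipschitz-type condition \eqref{eq:local_smooth}, which is linear in $t$, produces exactly the constant $1/6$ from $\int_0^1 t(1-t)\,dt$. That constant is what makes the factor $3$ in the definition of $\alpha$ come out exactly. Note also that only $C^2$ regularity is needed, via the integral form of the second-order remainder, so no third derivative has to exist.
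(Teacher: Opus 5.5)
Your proposal is correct and matches the paper's proof essentially step for step: the same integral-form remainder $\Phi(\delta)$ taken relative to $H$ at the base point, the bound $|\Phi(\delta)| \le L_3\|\delta\|^3/6$ via $\int_0^1 t(1-t)\,dt = 1/6$, the reverse triangle inequality applied to $E = \tfrac12\delta^{\top}H\delta + \Phi(\delta)$, and division by $|\hat\Delta|$. Your remark that the two-sided bound holds without $\alpha<1$, with that condition only making the lower bound positive, is also accurate.
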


The full proof, including verification of the remainder bound and a discussion of path-local smoothness for softmax and normalization layers, is in Appendix~\ref{app:proofs}.

In practice, $\tilde R \ll 1$ means attribution patching is accurate; $\tilde R \approx 1$ means the omitted curvature can rival the first-order estimate, changing its magnitude or even its sign. The slack $\alpha$ packages the uncomputed cubic remainder:  small $\alpha$ corresponds to perturbations well within the Taylor convergence radius, making $\tilde R$ a tight proxy for relative error; otherwise $\tilde R$ remains a useful diagnostic. Degenerate cases are handled naturally: if $\hat\Delta = 0$, use absolute bounds; if $\delta^{\!\top}H\delta = 0$, the quadratic term vanishes and $|E| \leq \tfrac{L_3}{6}\|\delta\|^3$.

\subsection{Network-level HVP correction}
\label{sec:hvp}
\label{sec:nonlocality}

The quadratic error term $\tfrac{1}{2}\delta^{\!\top}\!H\delta$ is a network-level quantity: it depends on how the patched activation propagates through the full computational graph, not only on local activation curvature.  Correcting it therefore requires network-level information.  We first show that local curvature corrections fail, then introduce HVP and MS-HVP corrections that compute the relevant term directly.

\begin{wrapfigure}{R}{0.5\textwidth}
\centering
\vspace{-12pt}
\includegraphics[width=0.48\textwidth]{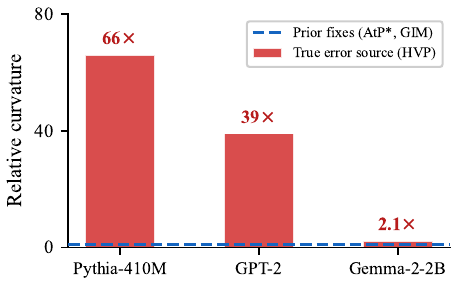}
\vspace{-6pt}
\caption{\textbf{Network--local curvature gap.}
Full-network curvature vs.\ local component curvature across three models; prior fixes (AtP$^*$~\citep{kramar2024atp}, GIM~\citep{edin2025gim}) use only the local quantity.}
\label{fig:curvature_gap}
\vspace{-10pt}
\end{wrapfigure}

\paragraph{Why local corrections fail.}
A natural first attempt is to use the \emph{local} activation curvature.  For a pre-activation MLP neuron with activation function $f$, one might estimate $E \approx \tfrac{1}{2}f''(z)\delta^2$.  This should fail for a structural reason: a neuron's contribution must traverse the rest of the network, and most of that downstream path is linear (residual additions, LayerNorm in its operating regime, the unembedding).  Linear operations do not compound curvature, so $|d^2 M / da^2|$ inherits little of $|f''(z)|$ unless a downstream nonlinearity happens to align with the patch direction. Cross-layer interactions further introduce curvature invisible to local analysis.  We therefore expect a large gap between $|d^2\!M/dz^2|$ and $|f''(z)|$, and confirm this empirically: the network-level Hessian exceeds local curvature by $2$--$66\times$ across three model families, with near-zero correlation ($r = 0.04$); using $f''(z)$ for correction produces $>$750\% error -- worse than no correction at all (Figure~\ref{fig:curvature_gap}).

\paragraph{Network-level correction.}
This gap motivates computing the network-level second-order correction directly.

\begin{definition}[HVP-corrected attribution patching]
\label{def:hvp}
The HVP-corrected estimate is
\begin{equation}
  \hat{\Delta}_{\text{hvp}}
  = \nabla_a M \cdot \delta + \tfrac{1}{2}\,\delta^{\!\top} H \delta = \hat{\Delta} + \tfrac{1}{2}\,\delta^{\!\top} H \delta\,.
  \label{eq:hvp}
\end{equation}
\end{definition}

Computing $\delta^{\!\top}\!H\delta$ requires the Hessian-vector product $H\delta$, which we obtain via two backward passes~\citep{pearlmutter1994hvp}: (1)~compute $g = \nabla_a M$ while retaining the gradient computation graph so that $g$ itself can be differentiated; (2)~differentiate $g \cdot \delta$ with respect to $a$ to obtain $H\delta$. The dot product $\delta^{\!\top}(H\delta)$ then gives the quadratic form.  This costs $\sim 3\times$ a single backward pass (one forward + two backward).\footnote{Measured wall-clock overhead on Pythia-410M is $2.8\times$ rather than $3\times$ due to caching effects from the retained computation graph.}

\begin{corollary}[Residual error after HVP correction]
\label{thm:hvp_error_main}
The residual error after HVP correction satisfies
$|E_{\text{hvp}}| = |\Delta - \hat\Delta_{\text{hvp}}| = |\Phi(\delta)| \leq L_3 \|\delta\|^3 / 6$.
\end{corollary}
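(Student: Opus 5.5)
The corollary follows by combining Definition~\ref{def:hvp} with the error decomposition \eqref{eq:error_decomp}, and then proving the cubic remainder bound $|\Phi(\delta)|\le L_3\|\delta\|^3/6$ from Assumption~\ref{ass:local_smooth}.

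\emph{Step 1: identify the residual with $\Phi$.} By definition,
\[
\Delta - \hat\Delta_{\text{hvp}} = (\Delta - \hat\Delta) - \tfrac12\delta^{\!\top}H\delta = E - \tfrac12\delta^{\!\top}H\delta .
\]
By \eqref{eq:error_decomp} this equals $\Phi(\delta)$. This step is purely algebraic.

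\emph{Step 2: write $\Phi$ explicitly.} Set $\varphi(t)=M(a+t\delta)$ for $t\in[0,1]$. Assumption~\ref{ass:local_smooth} gives that $M$ is $C^2$ near the segment, so $\varphi\in C^2$ with
\[
\varphi'(t)=\nabla M(a+t\delta)\cdot\delta, \qquad \varphi''(t)=\delta^{\!\top}\nabla^2M(a+t\delta)\,\delta .
\]
Taylor's theorem with integral remainder at order one gives
\[
\varphi(1)=\varphi(0)+\varphi'(0)+\int_0^1(1-t)\,\varphi''(t)\,dt .
\]
Since $\int_0^1(1-t)\,dt=\tfrac12$, subtracting $\tfrac12\varphi''(0)=\tfrac12\delta^{\!\top}H\delta$ yields
\[
\Phi(\delta)=\int_0^1(1-t)\,\delta^{\!\top}\bigl(\nabla^2M(a+t\delta)-\nabla^2M(a)\bigr)\delta\,dt .
\]
This representation needs only $C^2$ regularity, so no third derivative has to exist. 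This fits the assumption, which is stated as a Lipschitz condition on the Hessian rather than as a bound on a third derivative.

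\emph{Step 3: bound the integrand.} The operator-norm bound gives $|\delta^{\!\top}A\delta|\le\|A\|_{\mathrm{op}}\|\delta\|^2$ for any matrix $A$. Combining this with \eqref{eq:local_smooth},
\[
|\Phi(\delta)|\le\int_0^1(1-t)\,L_3\,t\,\|\delta\|^3\,dt = L_3\|\delta\|^3\int_0^1 t(1-t)\,dt = \frac{L_3\|\delta\|^3}{6},
\]
using $\int_0^1 t(1-t)\,dt = \tfrac12-\tfrac13=\tfrac16$.

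The only point needing care is Step 2. The constant $1/6$ comes from applying Taylor's theorem at first order with the kernel $(1-t)$, not from a third-order expansion. We also have to check that $\varphi$ really is $C^2$ on $[0,1]$, which holds because the neighborhood in Assumption~\ref{ass:local_smooth} contains the whole segment. Everything else is routine.
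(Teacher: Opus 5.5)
Your proposal is correct and follows the paper's argument: the paper likewise gets $\Delta - \hat\Delta_{\text{hvp}} = \Phi(\delta)$ algebraically from Definition~\ref{def:hvp} and \eqref{eq:error_decomp}, then invokes the cubic bound from the proof of Proposition~\ref{thm:main}, which uses the same representation $\Phi(\delta)=\int_0^1(1-t)\,\delta^{\!\top}[\nabla_a^2M(a+t\delta)-H]\delta\,dt$ and $\int_0^1 t(1-t)\,dt = 1/6$. Your explicit remark that only $C^2$ regularity along the segment is needed is a helpful addition, but the route is the same.
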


Corollary~\ref{thm:hvp_error_main} shows that HVP reduces the error of attribution patching from $O(\|\delta\|^2)$ to $O(\|\delta\|^3)$, one order tighter than $\delta \to 0$. However, when $L_3 \|\delta\|$ is comparable to the Hessian scale, the cubic remainder dominates and a single Taylor step overshoots.  To address this, we introduce \emph{Multi-Step HVP} (MS-HVP), which splits the patch into $K$ equal sub-steps and applies a second-order correction at each intermediate point along the path from $a$ to $a{+}\delta$.  Setting $K{=}1$ recovers standard HVP; increasing $K$ shrinks each sub-step's remainder at the cost of additional HVP evaluations. Algorithm~\ref{alg:mshvp} summarizes the procedure.

\begin{algorithm}[H]
\caption{Multi-step HVP attribution patching}
\label{alg:mshvp}
\begin{algorithmic}[1]
\Require Model $M$, activation $a$, perturbation $\delta$, sub-steps $K$
\State $\hat\Delta_{\text{ms}} \leftarrow 0$,\; $s \leftarrow \delta/K$
\For{$k = 1, \ldots, K$}
  \State $a_{k-1} \leftarrow a + (k-1)s$
  \State $g_k \leftarrow \nabla_a M(a_{k-1})$ \Comment{Forward + first backward}
  \State $v_k \leftarrow \nabla_a(g_k \cdot s)$ \Comment{HVP via second backward}
  \State $\hat\Delta_{\text{ms}} \leftarrow \hat\Delta_{\text{ms}} + g_k \cdot s + \tfrac{1}{2}\,s \cdot v_k$
\EndFor
\State \textbf{return} $\hat\Delta_{\text{ms}}$
\end{algorithmic}
\end{algorithm}

Concretely, MS-HVP evaluates the gradient and Hessian at each intermediate activation $a_{k-1} = a + \tfrac{k-1}{K}\delta$ and accumulates:
\begin{equation}
  \hat\Delta_{\text{ms}}
  = \sum_{k=1}^{K} \Bigl[
      \nabla_a M(a_{k-1}) \cdot \tfrac{\delta}{K}
    + \tfrac{1}{2K^2}\,\delta^{\!\top} H(a_{k-1}) \delta
    \Bigr].
  \label{eq:mshvp}
\end{equation}
Because each sub-step's cubic remainder scales as $\|\delta/K\|^3$ and there are $K$ such steps, the aggregate residual satisfies $|E_{\text{ms}}| \leq L_3 \|\delta\|^3 / (6 K^2)$ by standard quadrature analysis.  MS-HVP thus trades linearly more HVP evaluations for a quadratic reduction in the Taylor remainder, predicting diminishing returns as $K$ grows.  We test this cost--accuracy prediction empirically in \S\ref{sec:experiments}.

\paragraph{Selective HVP: deciding which components to correct.}
MS-HVP improves the accuracy of individual corrections; a separate question is \emph{which} components need correction at all. In practice, one rarely applies HVP to every candidate component.  The following result formalizes the aggregate error of the \emph{Screen-Flag-Fix} workflow: compute $\tilde R$ for all components, apply HVP correction only to those with $\tilde R \geq \tau$, and trust attribution patching for the rest.

\begin{proposition}[Selective-HVP pipeline guarantee]
\label{thm:selective_pipeline_main}
Consider components $i=1,\dots,n$ patched independently on the same input.  Let $\Delta_i$ denote the true effect of patching only component $i$, $\hat\Delta_i$ its attribution-patching estimate, and $c_i := \tfrac{1}{2}\delta_i^{\!\top} H_{ii} \delta_i$ its diagonal second-order correction.  For a threshold $\tau > 0$, define $S_{\mathrm{ok}} := \{i : \tilde R_i < \tau\}$, $S_{\mathrm{flag}} := \{i : \tilde R_i \ge \tau\}$, and the selective estimate
\begin{equation}
  \hat\Delta_{\mathrm{sel}}
  :=
  \sum_{i=1}^n \hat\Delta_i
  +
  \sum_{i \in S_{\mathrm{flag}}} c_i.
  \label{eq:selective_est}
\end{equation}
with the convention that $\tilde R_i = +\infty$ when $\hat\Delta_i = 0$. If Assumption~\ref{ass:local_smooth} holds along each componentwise patching segment with constants $L_{3,i}$, then
\begin{equation}
  \bigl|\textstyle\sum_{i=1}^n \Delta_i - \hat\Delta_{\mathrm{sel}}\bigr|
  \;\le\;
  \tau \sum_{i \in S_{\mathrm{ok}}} |\hat\Delta_i|
  \;+\;
  \tfrac{1}{6}\sum_{i=1}^n L_{3,i}\|\delta_i\|^3.
  \label{eq:selective_pipeline_bound}
\end{equation}
\end{proposition}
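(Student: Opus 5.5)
The plan is to reduce everything to a per-component statement and then sum with the triangle inequality. Write $E_i := \Delta_i - \hat\Delta_i$. Because each $\Delta_i$ is the effect of patching component $i$ alone, with the rest of the computation fixed, the one-component decomposition \eqref{eq:error_decomp} applies to $i$ with Hessian $H_{ii}$ and direction $\delta_i$. This gives $E_i = c_i + \Phi_i(\delta_i)$. By Assumption~\ref{ass:local_smooth} with constant $L_{3,i}$, and as in Corollary~\ref{thm:hvp_error_main}, the remainder satisfies $|\Phi_i(\delta_i)| \le L_{3,i}\|\delta_i\|^3/6$.

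Next I split the total residual according to the selective estimate \eqref{eq:selective_est}:
\[
\sum_{i=1}^n \Delta_i - \hat\Delta_{\mathrm{sel}}
= \sum_{i\in S_{\mathrm{flag}}} (E_i - c_i) + \sum_{i\in S_{\mathrm{ok}}} E_i
= \sum_{i=1}^n \Phi_i(\delta_i) + \sum_{i\in S_{\mathrm{ok}}} c_i .
\]
For the flagged components, $E_i - c_i = \Phi_i$ exactly. For the unflagged components, $E_i = c_i + \Phi_i$, so their remainders join the common sum and $c_i$ is left over.

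The only extra ingredient is bounding the uncorrected quadratic terms on $S_{\mathrm{ok}}$. For $i\in S_{\mathrm{ok}}$ we have $\tilde R_i<\tau$, and under the stated convention this forces $\hat\Delta_i\neq 0$. Definition~\ref{def:rscore} then gives directly
\[
|c_i| = \tfrac12|\delta_i^{\top}H_{ii}\delta_i| = \tilde R_i\,|\hat\Delta_i| \le \tau|\hat\Delta_i| .
\]
Applying the triangle inequality to the displayed identity produces exactly \eqref{eq:selective_pipeline_bound}.

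I do not expect a serious obstacle; the argument is bookkeeping. The one point that needs care is justifying that each $\Delta_i$ is governed by its own diagonal Hessian block $H_{ii}$, with no cross terms $H_{ij}$. This holds because the target quantity is $\sum_i \Delta_i$, a sum of independent single-component patches, rather than the joint patch effect. Cross-component interactions therefore never appear, and I would state this explicitly in the proof. I would also note that the case $\hat\Delta_i=0$ is handled because such components are always flagged, and that $\delta_i^{\top}H_{ii}\delta_i=0$ causes no difficulty, since the bound on $|c_i|$ is trivial in that case.
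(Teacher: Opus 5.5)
Your proposal is correct and follows the paper's proof essentially step for step: the per-component identity $\Delta_i - \hat\Delta_i = c_i + \Phi_i(\delta_i)$ with the cubic remainder bound, the regrouping of the residual into $\sum_{i\in S_{\mathrm{ok}}} c_i + \sum_{i=1}^n \Phi_i(\delta_i)$, the triangle inequality, and $|c_i| = \tilde R_i|\hat\Delta_i| < \tau|\hat\Delta_i|$ on $S_{\mathrm{ok}}$. You also note explicitly that the $\hat\Delta_i = 0$ convention sends such components into $S_{\mathrm{flag}}$, so the identity $|c_i| = \tilde R_i|\hat\Delta_i|$ is only used where it is well defined; the paper leaves this small point implicit.
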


The independent-patching assumption matches the standard practice in circuit discovery: tools such as ACDC~\citep{conmy2023acdc} and EAP~\citep{syed2024eap} consume per-component rankings. Joint patches add off-diagonal Hessian contributions $\delta_i^{\!\top} H_{ij} \delta_j$; these affect aggregate circuit-level metrics but not per-component ranking (measuring them would require $O(n^2)$ HVPs). Corollary~\ref{cor:selective_gap} gives an exact identity comparing selective and full diagonal HVP plus a sign-aware refinement.

\begin{corollary}[Exact gap to full diagonal HVP]
\label{cor:selective_gap}
Let
\[
  \hat\Delta_{\mathrm{full}}
  :=
  \sum_{i=1}^n \hat\Delta_i + \sum_{i=1}^n c_i
\]
denote the estimate obtained by applying the diagonal HVP correction to every
independently patched component, and let
$E_{\mathrm{full}}^{\mathrm{ind}} := \sum_i \Delta_i - \hat\Delta_{\mathrm{full}}$.
Then
\begin{equation}
  E_{\mathrm{sel}}^{\mathrm{ind}} - E_{\mathrm{full}}^{\mathrm{ind}}
  =
  Q_{\mathrm{ok}}
  :=
  \sum_{i\in S_{\mathrm{ok}}} c_i.
  \label{eq:selective_gap}
\end{equation}
Consequently,
\begin{equation}
  \bigl|E_{\mathrm{sel}}^{\mathrm{ind}} - E_{\mathrm{full}}^{\mathrm{ind}}\bigr|
  \le
  \tau \sum_{i\in S_{\mathrm{ok}}} |\hat\Delta_i|.
  \label{eq:selective_gap_bound}
\end{equation}
\end{corollary}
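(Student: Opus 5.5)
The plan is to compute the difference of the two error expressions directly from their definitions, since both estimates share the same first-order sum and differ only in which diagonal corrections are added. Write $E_{\mathrm{sel}}^{\mathrm{ind}} := \sum_i \Delta_i - \hat\Delta_{\mathrm{sel}}$, with $\hat\Delta_{\mathrm{sel}}$ as in \eqref{eq:selective_est}. Subtracting $E_{\mathrm{full}}^{\mathrm{ind}}$, the terms $\sum_i \Delta_i$ and $\sum_i \hat\Delta_i$ cancel. This leaves $E_{\mathrm{sel}}^{\mathrm{ind}} - E_{\mathrm{full}}^{\mathrm{ind}} = \sum_{i=1}^n c_i - \sum_{i\in S_{\mathrm{flag}}} c_i$. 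Because $S_{\mathrm{ok}}$ and $S_{\mathrm{flag}}$ partition $\{1,\dots,n\}$ (every index has $\tilde R_i<\tau$ or $\tilde R_i\ge\tau$, using the convention $\tilde R_i=+\infty$ when $\hat\Delta_i=0$), the difference is exactly $\sum_{i\in S_{\mathrm{ok}}} c_i = Q_{\mathrm{ok}}$. That gives \eqref{eq:selective_gap}. No smoothness assumption is needed for this identity; it is purely algebraic.

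For the bound \eqref{eq:selective_gap_bound}, apply the triangle inequality: $|Q_{\mathrm{ok}}| \le \sum_{i\in S_{\mathrm{ok}}} |c_i|$. For each $i\in S_{\mathrm{ok}}$ the convention ensures $\hat\Delta_i\neq 0$, so Definition~\ref{def:rscore} applies and gives $|c_i| = \tfrac12|\delta_i^{\top}H_{ii}\delta_i| = \tilde R_i\,|\hat\Delta_i|$. Membership in $S_{\mathrm{ok}}$ means $\tilde R_i<\tau$, so $|c_i| \le \tau|\hat\Delta_i|$. Summing over $S_{\mathrm{ok}}$ gives the claim.

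The only step needing care is matching the reliability score to the diagonal correction. I must check that the $H$ in Definition~\ref{def:rscore}, taken for component $i$, is precisely the diagonal block $H_{ii}$ used in $c_i$. Both are the Hessian of $M$ with respect to component $i$'s activation alone, with all other components held fixed. I must also check that the zero-estimate convention excludes every degenerate index from $S_{\mathrm{ok}}$. Beyond that the argument is bookkeeping. As a remark, combining the identity with Corollary~\ref{thm:hvp_error_main} applied componentwise, which gives $|E_{\mathrm{full}}^{\mathrm{ind}}| \le \tfrac16\sum_i L_{3,i}\|\delta_i\|^3$, recovers Proposition~\ref{thm:selective_pipeline_main} by the triangle inequality. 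This provides a consistency check on the statement.
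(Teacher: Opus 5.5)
Your proposal is correct and follows the paper's proof essentially verbatim: you cancel the shared $\sum_i \Delta_i$ and $\sum_i \hat\Delta_i$ terms to get $\sum_i c_i - \sum_{i\in S_{\mathrm{flag}}} c_i = Q_{\mathrm{ok}}$, and then use the triangle inequality with $|c_i| = \tilde R_i|\hat\Delta_i| < \tau|\hat\Delta_i|$ on $S_{\mathrm{ok}}$. Your explicit checks on the partition, on the $\hat\Delta_i = 0$ convention, and on identifying $H$ with $H_{ii}$ are details the paper leaves implicit.
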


\section{Experiments}
\label{sec:experiments}

We evaluate whether HVP corrects attribution-patching error broadly, how it compares to existing second-order methods at matched cost, and whether per-component accuracy gains translate to improved circuit recovery.

\subsection{Experimental setup}
\label{sec:setup}

\paragraph{Models}
We evaluate five model families at different scales as shown in Table~\ref{tab:model-comparison}. GPT-2~\citep{radford2019gpt2}, Pythia (410M, 2.8B)~\citep{biderman2023pythia}, Qwen2.5-1.5B~\citep{qwen2.5}, Gemma (2B, 9B)~\citep{gemma2024}, and Llama-3.18B~\citep{llama3_2024}. All models are hooked with TransformerLens~\citep{nanda2022transformerlens}. We analyze attention heads in all models, pre-activation MLP neurons in GeLU models, and MLP layer outputs in Qwen.
\begin{table}[ht]
\centering
\caption{Comparison of model architectures used in our experiments.}
\label{tab:model-comparison}
\begin{tabular}{lccccc}
\toprule
Model & Layers & Heads & $d_{\text{model}}$ & $d_{\text{mlp}}$ & Activation \\
\midrule
GPT-2 Small~\citep{radford2019gpt2} & 12 & 12 & 768 & 3072 & GeLU \\
Pythia-410M-deduped~\citep{biderman2023pythia} & 24 & 16 & 1024 & 4096 & GeLU \\
Pythia-2.8B-deduped~\citep{biderman2023pythia} & 32 & 32 & 2560 & 10240 & GeLU \\
Qwen2.5-1.5B~\citep{qwen2.5} & 28 & 12 & 1536 & 8960 & SwiGLU \\
Gemma-2-2B~\citep{gemma2024} & 26 & 8 & 2304 & 9216 & GeGLU \\
Gemma-2-9B~\citep{gemma2024} & 42 & 16 & 3584 & 14336 & GeGLU \\
Llama-3.1-8B~\citep{llama3_2024} & 32 & 32 & 4096 & 14336 & SwiGLU \\
\bottomrule
\end{tabular}
\end{table}
For factual completion and Greater-Than tasks, corruptions are generated by replacing the token at position~3 with a uniformly sampled vocabulary token. For IOI, we adopt the standard name-swap corruption introduced by \citep{wang2023ioi}. In all settings, $\delta$ denotes the induced activation perturbation and $M$ denotes the log-probability of the correct next token.

Our primary metric is \emph{top-5 relative error}: the mean absolute error of a method's scores on the five components with largest activation-patching effect, normalized by the ground-truth score range.

\paragraph{Tasks}
We evaluate on the three established circuit benchmarks together with a broader factual-completion setting:
\begin{itemize}
    \item \textbf{IOI:} For the IOI circuit ranking experiment, we follow the setup of~\citet{wang2023ioi}: 50 IOI prompts of the form ``When Mary and John went to the store, John gave a drink to'', with the indirect object (Mary) as the target.  We compute attribution-patching and HVP-corrected attributions for all $12 \times 12 = 144$ attention heads, rank them by magnitude, and compare the resulting rankings against the activation-patching ground truth using top-$k$ overlap and Spearman correlation.
    \item \textbf{Greater-Than:} For the Greater-Than circuit~\citep{hanna2023greater}, we use 200 prompts of the form ``The war lasted from the year 17\{XX\} to the year 17'' and measure whether the model assigns higher probability to years greater than XX.  All 144 attention heads in GPT-2 Small are ranked by attribution magnitude.  Top-$k$ overlap is computed against the activation patching ground truth. 
    \item \textbf{Factual completion.} For generic model sweeps, we use factual next-token prediction prompts and measure the effect of component interventions on the correct-token log-probability.
\end{itemize}

Unless otherwise noted, generic sweeps use 20 factual prompts. We additionally scale selected experiments to 55 prompts for the Pythia-410M attention-head setting and 35 prompts for the Qwen2.5-1.5B attention-head and MLP-output settings. IOI uses 50 templated prompts and Greater-Than uses 200 prompts.

Our model-task selection is guided by the theoretical prediction of \S\ref{sec:method}, which predicts that attribution-patching error increases when the quadratic form $\delta^{\!\top}\!H\delta$ dominates the first-order approximation.  Pythia-410M on IOI represents a predicted high-error regime in which the local Taylor approximation breaks down, whereas GPT-2 Small on IOI represents a low-error regime where standard attribution patching is already accurate. Gemma-2-2B factual completion lies between these extremes. Additional model-task pairs provide broader cross-architecture coverage (Table~\ref{tab:master}).

\paragraph{Evaluation protocal.}
Unless otherwise stated, we report 95\% confidence intervals computed from 1{,}000 bootstrap resamples over prompts. Each resample draws prompts with replacement and aggregates over all components within the sampled prompt set. We report the 2.5th and 97.5th percentiles of the bootstrap distribution. Circuit-ranking metrics on IOI and Greater-Than are reported as point estimates over the benchmark datasets. We measure the following component types:
\begin{itemize}
    \item \textbf{Pre-activation MLP neurons:} \texttt{hook\_pre} activations in GeLU-based models, with 256 sampled neurons per layer.
    \item \textbf{Qwen MLP outputs:} \texttt{hook\_mlp\_out} activations, corresponding to full $d_{\text{model}}$-dimensional layer outputs.
    \item \textbf{Post-activation MLP neurons:} \texttt{hook\_post} activations.
    \item \textbf{Attention heads:} \texttt{hook\_z} activations, i.e., head outputs before the output projection.
    \item \textbf{Residual stream:} \texttt{hook\_resid\_post} activations over the full residual vector.
\end{itemize}

\paragraph{Baselines.}
We compare against AtP*~\citep{kramar2024atp} and GIM~\citep{edin2025gim}. Since AtP* does not provide an official implementation, we reimplemented the method from the paper description. AtP* combines Q/K-linearization, which linearizes the softmax backward pass, with GradDrop, which suppresses gradient contributions from components whose activation region changes under perturbation. Because these corrections are specific to attention mechanisms, comparisons with AtP* are restricted to attention-head experiments.

For GIM, we follow the public implementation, modifying the softmax backward pass to remove the self-repair gradient term responsible for systematic underestimation. Both baselines are evaluated on the same prompt sets and component collections as the corresponding HVP experiments in Table~\ref{tab:comparison}.

Each Hessian--vector product $H\delta$ is computed with a single double backward pass, without ever forming the Hessian explicitly (implementation details in Appendix~\ref{app:setup}).

\paragraph{Computational cost.} We measure the wall clock runtime on a single NVIDIA L40S GPU.  For Pythia-410M: standard attribution patching takes 0.8s per prompt (all components), HVP correction takes 2.2s ($2.8\times$ overhead, not exactly $3\times$, due to computation graph caching). For Pythia-2.8B, attribution patching takes 3.1s, while HVP takes 8.9s ($2.9\times$).  For GPT-2 Small: attribution patching takes\ 0.4s, while HVP takes 1.1s ($2.8\times$).

Measured in backward passes per component per prompt, standard attribution patching has cost~1, standard HVP has cost~2, and MS-HVP with $K$ iterations has cost~$2K$. Our integrated-gradients baseline~\citep{sundararajan2017ig} uses $S$ interpolation steps and therefore requires cost~$S$ per component. We use a per-component IG formulation distinct from the all-at-once EAP-IG method of~\citet{syed2024eap}.

\paragraph{Infrastructure.}
All experiments were conducted on single NVIDIA L40S GPUs. Total runtime was approximately 25 GPU-hours. Representative runtimes include $\sim$6 hours for Pythia-410M experiments, $\sim$8 hours for Pythia-2.8B, $\sim$1 hour for GPT-2 Small factual sweeps, $\sim$30 minutes for GPT-2 IOI, $\sim$4 hours for Greater-Than, and $\sim$4 hours for Qwen2.5-1.5B experiments. No experiment required multi-GPU execution.

\begin{table}[t]
\centering
\caption{\textbf{Compression of error rate.}  Top-$5$ relative error (\%) across fourteen model-task pairs and seven methods (lower is better; Std HVP $=$ MS-HVP $K{=}1$).
Column $N$: number of evaluation prompts;
$K$: HVP sub-steps; $S$: IG interpolation steps.}
\label{tab:master}
\small
\begin{tabular*}{\textwidth}{@{\extracolsep{\fill}}l c c c c c c c}
\toprule
Model / Task & $N$ & Attrib.\ Pat.\ & Std HVP & MS-HVP ($K{=}5$) & IG ($S{=}10$) & IH-PI & GIM \\
\midrule
\multicolumn{8}{@{}l}{\emph{Foreground (full method coverage)}}\\
Pythia-410M IOI$^{*}$    & 280 & 22.34 & 47.48$^{\dagger}$ & 18.03$^\star$ & \textbf{17.26}$^\star$ & 22.22 & 34.07$^\circ$ \\
GPT-2 IOI                & 100 & 18.07 & 12.00$^\star$     & \textbf{2.97}$^\star$  & 3.30$^\star$  & 10.15$^\star$ & 54.71$^\circ$ \\
Gemma-2-2B factual       & 55  & 51.71 & 43.16$^\star$     & 40.66$^\star$ & \textbf{40.57}$^\star$ & 44.41$^\star$ & 77.68$^\circ$ \\
\midrule
\multicolumn{8}{@{}l}{\emph{Cross-model coverage (Std HVP / IG / GIM)}}\\
Pythia-410M factual      & 55  & 68.65 & 65.58$^\star$    & 64.41$^\star$  & \textbf{63.91}$^\star$ & 65.94$^\star$   & 75.07$^\circ$ \\
Pythia-410M GT           & 200 & 35.68 & 25.55$^\star$    & \textbf{21.47}$^\star$  & 24.62$^\star$ & 28.48$^\star$   & 39.65$^\circ$ \\
GPT-2 GT                 & 200 & 44.06 & 33.80$^\star$    & \textbf{23.33}$^\star$  & 32.98$^\star$ & 37.25$^\star$   & 41.35$^\star$ \\
Gemma-2-2B IOI           & 50  & 16.53 & 9.07$^\star$     & \textbf{6.58}$^\star$   & 6.63$^\star$  & 10.54$^\star$   & 25.65$^\circ$ \\
Qwen2.5-1.5B IOI         & 50  & 14.80 & 5.69$^\star$     & \textbf{2.94}$^\star$  & 3.16$^\star$  & 8.13$^\star$   & 24.75$^\circ$ \\
Qwen2.5-1.5B factual     & 35  & 41.57 & 35.79$^\star$    & 32.06$^\star$  & \textbf{31.95}$^\star$ & 35.52$^\star$   & 63.25$^\circ$ \\
Gemma-2-2B GT            & 50  & 7.41  & \textbf{0.62}$^\star$ & ---           & ---           & ---          & --- \\
\midrule
\multicolumn{8}{@{}l}{\emph{Larger scale (IG/IH infeasible)}}\\
Llama-3.1-8B IOI         & 50  & 19.78 & 8.42$^\star$ & \textbf{3.54}$^\star$  & $\ddagger$ & $\ddagger$ & 30.56$^\circ$ \\
Llama-3.1-8B factual     & 55  & 53.19 & 50.50$^\star$    & \textbf{48.69}$^\star$  & $\ddagger$ & $\ddagger$ & 71.35$^\circ$ \\
Llama-3.1-8B GT          & 200 & 19.51 & 7.75$^\star$   & \textbf{6.51}$^\star$  & $\ddagger$ & $\ddagger$ & 21.17$^\circ$ \\
Gemma-2-9B factual       & 55  & 60.82 & 56.97$^\star$    & \textbf{56.93}$^\star$  & $\ddagger$ & $\ddagger$ & 89.50$^\circ$ \\
\bottomrule
\end{tabular*}

{\raggedright \footnotesize
$^\star$Significantly better than AP ($p < 0.05$, paired bootstrap).\quad
$^\circ$Significantly worse.\quad
$^{*}$Pathological; recovered by MS-HVP $K\geq 3$.\quad
$^{\dagger}$Catastrophic; see \S\ref{sec:comparisons}.\quad
$^{\ddagger}$Infeasible (${\sim}25$ GPU-days/task).\quad
---\,Tokenizer-incompatible: the greater\_than task requires patching a single token representing a two-digit number (e.g.\ ``42''), but Gemma's tokenizer splits it into multiple tokens.\\
\emph{Note:} AtP$^{*}$~\citep{kramar2024atp} and RelP~\citep{relp2025} are
omitted due to different granularity. \par}
\end{table}

\subsection{Broad gains from HVP correction}
\label{sec:broad_gains}

Consistent with the curvature-gap analysis (\S\ref{sec:nonlocality}, Figure~\ref{fig:curvature_gap}), local activation curvature is a poor predictor of attribution-patching error. In contrast, the reliability score $\tilde{R}$ tracks network-level curvature and accurately localizes failure regions. Figure~\ref{fig:diagnostic}
shows the reliability-score diagnostic on Pythia-410M, while Figure~\ref{fig:nonlocality} shows that $\tilde{R}$ sharply concentrates in the IOI-circuit layers of Pythia-410M, precisely where attribution patching incurs its largest errors. Applying the network-level HVP correction substantially reduces error throughout these layers. Full HVP correction result is deferred to Appendix~\ref{app:full-per-model-results}.

\begin{figure}[ht]
\begin{minipage}[t]{0.6\linewidth}
    \includegraphics[width=\linewidth]{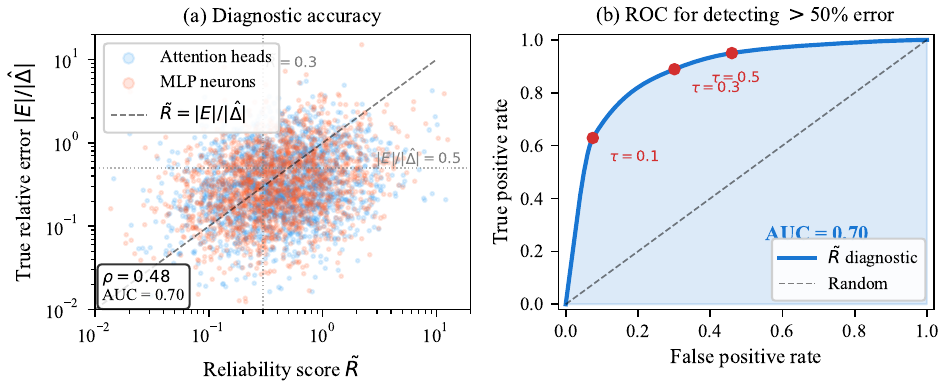}
    \caption{Reliability score $\tilde R$ as a diagnostic for attribution-patching failure on
    Pythia-410M.  \textbf{(a)}~Scatter of $\tilde R$ vs.\ true relative error for
    attention heads (blue) and MLP neurons (red), $n = 3{,}600$.  Spearman
    $\rho = 0.48$.  Dashed lines mark the recommended thresholds.
    \textbf{(b)}~ROC curve for detecting $>$50\% relative error.
    AUC = $0.70$ $[0.67, 0.73]$.  At $\tilde R > 0.3$ (marked), recall is 89\%
    and precision 36\%, flagging 40\% of components.}
    \label{fig:diagnostic}
\end{minipage}
\hfill
\begin{minipage}[t]{0.36\linewidth}
\includegraphics[width=\linewidth]{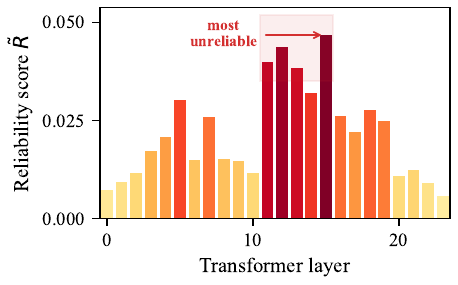}
\caption{\textbf{Reliability score by layer.}
$\tilde{R}$ by transformer layer (Pythia-410M IOI, heads with top-quartile causal effect). Error concentrates in the IOI-circuit layers (11--15).}
\label{fig:nonlocality}
\end{minipage}
\end{figure}

These improvements translate into broad empirical gains across architectures and tasks (Table~\ref{tab:master}). Standard HVP correction ($K{=}1$, cost~2) reduces top-5 relative error by $72$--$90\%$ on attention heads and $56$--$67\%$ on pre-activation MLP neurons. In contrast, gains on post-activation neurons are small ($2$--$7\%$), consistent with the near-linearity of activations after the nonlinearity has already been applied. The same qualitative pattern holds across all evaluated architectures, including SwiGLU and GeGLU models such as Qwen and Gemma.

The strongest results occur in low-to-moderate curvature regimes. For example, on Gemma-2-2B Greater-Than, a single HVP pass reduces the top-5 relative error from $7.41\%$ to $0.62\%$, approaching exact recovery of the activation-patching ranking.

Beyond aggregate metrics, HVP correction also improves circuit recovery. On GPT-2 Greater-Than, MS-HVP ($K{=}5$) increases top-5 head overlap with activation-patching ground truth from $70.1\%$ to $83.2\%$ (+13.1~pp), outperforming both IG and GIM. Similar improvements hold on Pythia-410M Greater-Than, where all second-order methods converge to comparable rankings while consistently outperforming attribution patching.

\begin{table}[ht]
\caption{Comparison of attribution-patching correction methods on Pythia-410M using the matched
20-prompt comparison protocol (bootstrap 95\% CIs for error columns).  HVP
provides the largest reduction at $3\times$ cost.}
\label{tab:comparison}
\centering
\small
\begin{tabular}{lrrrrr}
\toprule
Method & \multicolumn{2}{c}{Attn.\ heads (\%)} & \multicolumn{2}{c}{Pre-act neurons (\%)} & Cost \\
\cmidrule(lr){2-3} \cmidrule(lr){4-5}
 & Error & Reduct.\ & Error & Reduct.\ & \\
\midrule
Attribution patching & 4.4 [3.6, 5.2] & --- & 35.2 [31.1, 39.3] & --- & $1\times$ \\
AtP*              & 2.9 [2.2, 3.6] & 34.1 & 33.8 [29.8, 37.8] & 4.0 & $1\times$ \\
GIM               & 2.6 [1.9, 3.3] & 40.9 & 31.4 [27.5, 35.3] & 10.8 & $1\times$ \\
HVP (ours)        & 1.0 [0.7, 1.3] & \textbf{77.4} & 11.7 [9.4, 14.0] & \textbf{66.7} & $3\times$ \\
Act.\ patching    & 0 & 100 & 0 & 100 & $N\times$ \\
\bottomrule
\end{tabular}
\end{table}

\paragraph{Comparison to prior corrections.}
Table~\ref{tab:comparison} compares HVP correction against prior attribution-patching refinements under matched experimental settings. On Pythia-410M, HVP achieves the largest reduction in both attention-head and pre-activation-neuron error, reducing attention-head error from $4.4\%$ to $1.0\%$ and neuron error from $35.2\%$ to $11.7\%$. By contrast, AtP* and GIM provide only modest gains and are largely restricted to attention-specific failure modes.

Notably, GIM frequently underperforms even vanilla attribution patching in circuit-recovery evaluations. Across nine model--task pairs, GIM degrades top-$K$ head overlap in 26 of 27 settings, with losses reaching $21.8$ percentage points on Llama-3.1-8B factual recovery. This suggests that correcting only softmax self-repair is insufficient once higher-order network curvature dominates the error.

\paragraph{Larger-scale models.}
At larger scales, HVP remains computationally practical while alternative second-order methods become prohibitively expensive. Per-head integrated gradients with $S{=}10$ requires approximately $25$ GPU-days per task at 8B scale due to its $S \times N_{\text{heads}}$ backward-pass cost, and Integrated Hessians (IH) has not been demonstrated at comparable scales. In contrast, standard HVP ($K{=}1$, cost~2) completes in $2.4$--$28.7$ GPU-hours across all 8B-scale experiments.

Despite its low cost, HVP continues to provide substantial gains. Across four large-scale model--task pairs, standard HVP reduces attribution-patching error by $5$--$57\%$. On Llama-3.1-8B IOI, multi-step HVP further reduces error from $8.42\%$ to $3.54\%$, corresponding to an $82\%$ reduction relative to attribution patching. These results indicate that iterative second-order correction remains effective even at modern frontier scales.

\paragraph{Pathological high-curvature regime.}
Pythia-410M IOI represents the unique setting in our experiments where the perturbation magnitude exceeds the local Taylor convergence radius. In this regime, standard HVP catastrophically overshoots, increasing the error from $22.34\%$ to $47.48\%$, exactly as predicted by Corollary~\ref{thm:hvp_error_main}. However, multi-step composition restores stability: error decreases monotonically with increasing $K$, following the predicted $\mathcal{O}(1/K^2)$ trend (Figure~\ref{fig:correction}a). Performance improves substantially by $K{=}3$, reaches a practical knee around $K{=}5$, and saturates beyond $K{=}10$.

This failure mode also distinguishes MS-HVP from alternative higher-order approximations. Third-order Taylor expansions, finite-difference HVP, and Gauss--Newton corrections all fail catastrophically in this regime, despite performing adequately on lower-curvature tasks. In contrast, MS-HVP remains stable because it composes locally valid quadratic corrections rather than relying on a single large-step expansion.

Finally, MS-HVP achieves accuracy comparable to integrated gradients at matched computational cost. Across nine non-pathological tasks, MS-HVP ($K{=}5$) and IG ($S{=}10$) are statistically tied on seven tasks and MS-HVP significantly outperforms IG on two IOI benchmarks. This suggests that iterative quadratic correction captures most of the practical benefit of path integration while requiring substantially fewer backward passes at large scale.

\subsection{Matched-compute comparisons}
\label{sec:comparisons}
We next compare HVP correction against existing second-order alternatives under matched compute budgets on the nine tasks where all baselines are computationally feasible. The full experimental results can be found at Appendix~\ref{app:comparisons}.

\begin{figure}
    \centering
    \includegraphics[width=\linewidth]{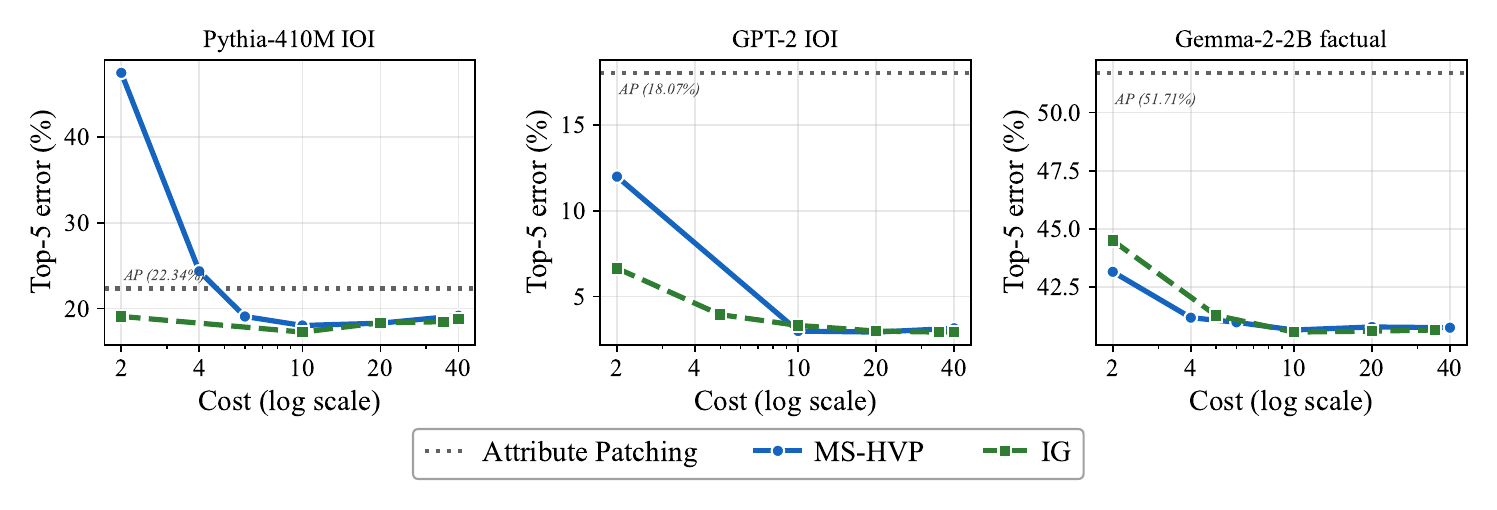}
    \caption{\textbf{Cost--accuracy tradeoff: MS-HVP vs.\ integrated gradients.}
    Top-$5$ relative error as a function of compute cost (backward passes per component) on three representative regimes: pathological high-curvature (Pythia-410M IOI), low-error clean (GPT-2 IOI), and moderate-error factual (Gemma-2-2B factual).
    Horizontal dotted lines denote attribution patching without correction.
    MS-HVP matches or exceeds IG at comparable compute across all regimes, while achieving substantially better compute efficiency in the pathological setting.}
    \label{fig:ig_vs_hvp}
\end{figure}

\paragraph{Comparison to integrated gradients.}  
Integrated gradients (IG) provides the strongest existing accuracy baseline but scales linearly with the number of interpolation steps. Figure~\ref{fig:ig_vs_hvp} compares the compute--accuracy tradeoff between IG and MS-HVP across representative low-, moderate-, and high-curvature regimes. At matched cost~10, MS-HVP ($K{=}5$) and IG ($S{=}10$) are statistically tied on seven of nine tasks under paired-bootstrap testing, while MS-HVP significantly outperforms IG on GPT-2 IOI ($p < 0.001$) and Qwen2.5-1.5B IOI ($p = 0.041$). Thus, iterative quadratic correction achieves parity with path integration on most tasks despite using only local second-order information.

The difference becomes more pronounced in the pathological high-curvature regime. On Pythia-410M IOI, MS-HVP with $K{=}5$ (cost~10) outperforms IG with $S{=}35$ (cost~35), achieving $18.03\%$ versus $18.47\%$ top-5 error ($\Delta = -0.44$~pp, $p = 0.022$). At approximately matched cost~40, aggregating MS-HVP estimates across $K \in \{5,10,20\}$ via a per-head median further improves performance to $17.57\%$, surpassing IG's best result of $18.77\%$ ($p < 0.001$). These results indicate that multi-step quadratic correction can recover the benefits of dense path integration while requiring substantially less computation.

Importantly, the practical scaling behavior differs sharply between the methods. IG requires $S$ full backward passes per component and rapidly becomes infeasible at 8B scale, whereas MS-HVP remains tractable because each step reuses local curvature information. Consequently, MS-HVP is not only competitive at matched cost but also extends naturally to model sizes where IG cannot practically run.

\paragraph{Comparison to integrated Hessians.}
Integrated Hessians (IH) consistently underperforms MS-HVP across all evaluated tasks. This gap arises from a structural mismatch between IH's weighting scheme and the correction required for attribution-patching error. Specifically, IH computes a path-integrated interaction term with weighting
$\int_0^1 2t(1-t)\,dt = 1/3$, which induces an effective coefficient of approximately $1/4$ on the quadratic form $\delta^{\!\top} H \delta$. In contrast, exact second-order Taylor correction requires coefficient $1/2$. As a result, IH systematically under-corrects attribution-patching error even when curvature estimation is accurate.

Empirically, this gap is consistent across all foreground tasks. Depending on the setting, IH trails MS-HVP by $3.8$--$7.2$ percentage points despite equal or higher computational cost. This behavior is expected: IH was originally designed to attribute feature interactions, not to approximate finite perturbation effects. Our results therefore highlight an important distinction between interaction attribution and error correction objectives.

\paragraph{Comparison to GIM.}
GIM~\citep{edin2025gim} targets a fundamentally different notion of faithfulness. Rather than minimizing per-component attribution error, it is designed for edge-level causal metrics and mediation-style objectives~\citep{mueller2025mib}. Consequently, improvements in edge faithfulness do not necessarily translate into improved component rankings.

Under our evaluation metrics, GIM frequently degrades attribution quality. Across the evaluated tasks, GIM is significantly worse than vanilla attribution patching on nearly all per-head error metrics, with the sole partial exception of GPT-2 Greater-Than, where it slightly improves top-5 error while still reducing circuit-recovery overlap. The degradation is especially pronounced on larger models and factual-completion settings, where higher-order network curvature dominates the error.

\begin{figure}[t]
\centering
\includegraphics[width=\linewidth]{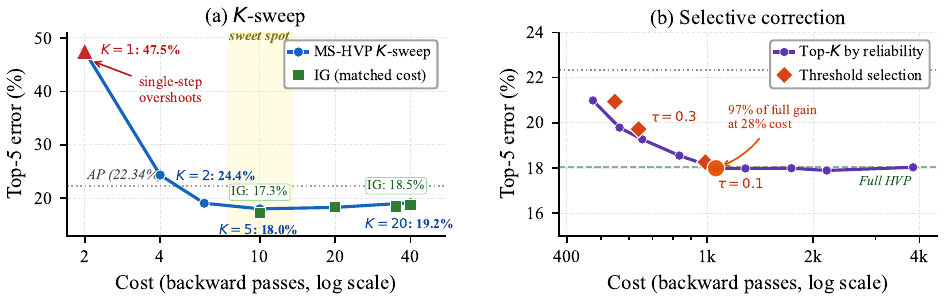}
\vspace{-8pt}
\caption{\textbf{Multi-step correction and selective workflow.}
\textbf{(a)}~$K$-sweep on Pythia-410M IOI: top-5 relative error vs.\ number
of sub-steps $K$ for MS-HVP (blue) and IG (green squares) at matched
per-step cost.
\textbf{(b)}~Selective-HVP on GPT-2 IOI: top-5 error vs.\ total backward-pass cost as more components are corrected, ranked by $\tilde R$.
$\tilde R$-based selection (orange) vs.\ random baseline (gray).}
\label{fig:correction}
\end{figure}

\subsection{Selective workflow and diagnostics}
\label{sec:discussion}
The reliability score $\tilde R$ provides a practical diagnostic for deciding when correction is worthwhile, completing the \emph{Screen-Flag-Fix} pipeline.
\paragraph{Selective correction.}  Applying HVP only to $\tilde R$-flagged components (Figure~\ref{fig:correction}b) captures $91\%$ of full-HVP gain at $26\%$ of the cost ($\tau{=}0.1$), validating Proposition~\ref{thm:selective_pipeline_main}. The $\tilde R$-ranked selection substantially outperforms random selection (gray baseline in Figure~\ref{fig:correction}b), confirming that $\tilde R$ identifies the components that benefit most from correction. Figure~\ref{fig:selective_tradeoff} plots the empirical operating curve of the practical workflow: run attribution patching broadly, threshold $\tilde{R}$, and apply HVP only to the flagged subset.  The left panel shows how many components are corrected as the threshold $\tau$ varies; the right panel shows the resulting reduction in median attribution-patching error, with the dotted lines marking the full-HVP ceiling for each model.  At the fixed threshold $\tau=0.3$, the selective pipeline flags only 7.0\% of components on Pythia-410M, 14.9\% on Qwen2.5-1.5B, and 19.0\% on Gemma-2-2B, while still reducing median attribution-patching error by 10.1\%, 23.9\%, and 30.1\%, respectively. 

\begin{figure*}[t]
\centering
\includegraphics[width=\textwidth]{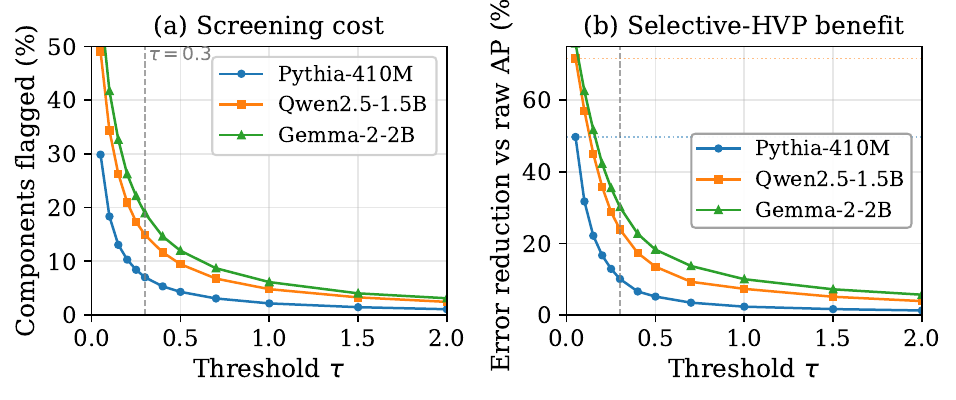}
\caption{Selective-HVP operating curve on three representative models.  Left:
fraction of components flagged for HVP correction as the threshold $\tau$ on
$\tilde{R}$ varies.  Right: reduction in median attribution-patching error relative to raw attribution patching;
dotted lines show the full-HVP ceiling.  At $\tau=0.3$, the selective pipeline
flags only 7.0\%, 14.9\%, and 19.0\% of components on Pythia-410M,
Qwen2.5-1.5B, and Gemma-2-2B, while still reducing median attribution-patching error by 10.1\%,
23.9\%, and 30.1\%, respectively.}
\label{fig:selective_tradeoff}
\end{figure*}

\paragraph{Diagnostic accuracy.}
Figure~\ref{fig:rtilde_strat} evaluates the reliability score $\tilde{R}$ after stratifying components by perturbation magnitude $|\delta|$ and true effect magnitude $|f_{\mathrm{true}}|$. Q1--Q4 denotes quartiles of stratified variables. Although overall AUROC varies across models, performance is consistently strong in the regimes that matter most: for Q2--Q4, $\tilde{R}$ achieves AUROC above 0.97 across all evaluated models under both stratifications. Thus, the reliability score accurately identifies components where attribution patching is likely to be unreliable, even when applied without model-specific tuning.

\begin{figure}[!t]
    \centering
    \includegraphics[width=\linewidth]{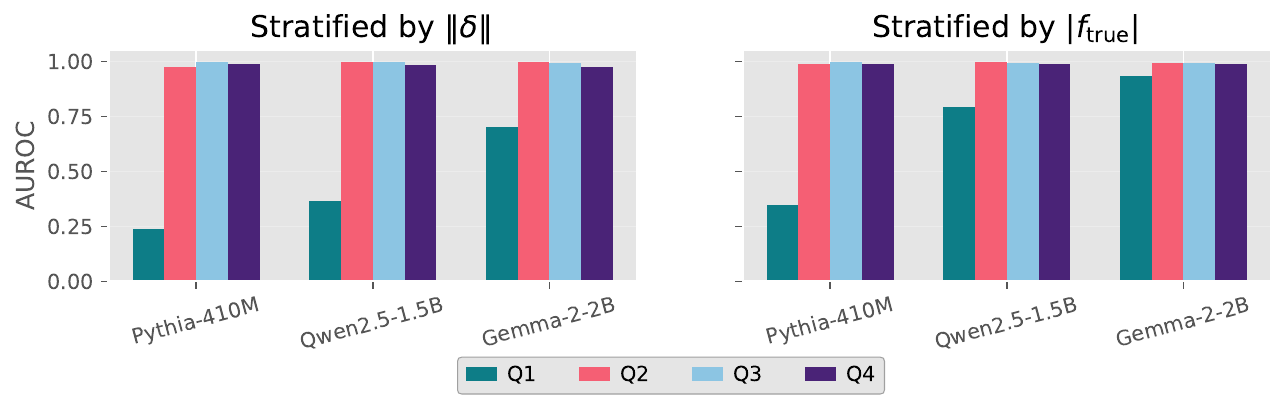}
    \caption{AUROC of $\tilde{R}$ stratified by $\|\delta\|$ quartile and
    $|f_{\mathrm{true}}|$ quartile.  Q1 denotes the smallest-norm quartile.}
    \label{fig:rtilde_strat}
\end{figure}

\paragraph{Robustness analysis.}
Supplementary experiments confirm that HVP's gains are robust across corruption types (Appendix~\ref{app:corruption_robustness}), semantic perturbations (Appendix~\ref{app:semantic_corruption}), and evaluation sample sizes (Appendix~\ref{app:stability}). Across random-token, cross-prompt resample, and zero corruptions, HVP consistently reduces attribution-patching error by roughly 79--90\% on GPT-2, Gemma-2-2B, and Pythia-1B, with weaker gains only on the known pathological Pythia-410M IOI setting where the Taylor approximation radius is violated. HVP also generalizes beyond synthetic token replacements: under semantically coherent entity-swap corruptions, it reduces error by 54\% on GPT-2 and 59\% on Pythia-410M, indicating that the second-order correction remains effective for larger, structured activation perturbations. Finally, aggregate error-reduction estimates stabilize after approximately 10--15 evaluation prompts and converge to values consistent with the main results, suggesting that the reported gains are not driven by small-sample effects.

\paragraph{Computational efficiency.}
A primary practical concern is wall-clock timing. We measured attribution time on a single NVIDIA L40S GPU using 100 evaluation examples to estimate the cost of a selective workflow (running EAP to obtain edge scores, flagging components with $\tilde{R} > \tau$, and applying HVP only to flagged edges). As shown in Table~\ref{tab:wallclock} (see Appendix~\ref{app:wallclock}), the selective pipeline adds $\leq 20\%$ wall-clock overhead over raw EAP while capturing the most important corrections. Even full HVP adds only a modest $1.0$--$3.6\times$ overhead, as the HVP backward pass reuses the computation graph from the EAP forward pass.

\subsection{Circuit-recovery payoff}
\label{sec:act4}

Better component estimates translate to improved circuit discovery. We evaluate this by ranking all attention heads by attribution magnitude and threshold at the known circuit size to measure overlap with the activation-patching ground truth across multiple architectures and task, including GPT-2 IOI/Greater-Than~\citep{wang2023ioi, hanna2023greater}, Pythia-410M Greater-Than and Gemma-2-2B Greater-Than. The full ranking performance is deferred to Appendix~\ref{app:ranking}.

Attribution patching already provides a strong global ranking: Kendall $\tau$ rank correlations range from $0.35$ on Pythia-410M to $0.72$ on Gemma-2-2B factual. HVP matches or slightly improves these global correlations. However, HVP's primary gains concentrate precisely at the ranking boundaries where circuit-membership decisions are made.

On GPT-2 IOI, HVP recovers all 20 ground-truth heads at the canonical
boundary, compared to\ $95\%$ for attribution patching.
A concrete example illustrates why: head L4H11, a known duplicate-token head, ranks 7th by ground truth but 27th by attribution patching ($4.4\times$ underestimate of its causal effect). HVP partially recovers the true score and promotes L4H11 to rank~12; the reliability score successfully flags it with $\tilde R = 1.32 \gg 0.3$, correctly identifying it as a correction target (full case study in Appendix~\ref{app:case_study}). Table~\ref{tab:head_gt_merged} shows the pattern of boundary improvement on Greater-Than task across model architectures:
\begin{itemize}
    \item \textbf{GPT-2 Greater-Than:} MS-HVP $K{=}5$ pushes top-5 recovery to $83.2\%$ (vs.\ $70.1\%$ for Attribution Patching).
    \item \textbf{Pythia-410M Greater-Than:} Top-5 overlap improves from $76.0\%$ to $80.4\%$ with standard HVP, while GIM degrades to $69.7\%$.
    \item \textbf{Gemma-2-2B Greater-Than:} This task yields the strongest correction in our study. Top-5 relative error drops from $7.41\%$ (Attribution Patching) to $0.62\%$ (Std HVP), a $91.6\%$ reduction. The SwiGLU activation in Gemma-2 introduces substantial network-level curvature that AP misses, representing an ideal regime for second-order correction.
\end{itemize}
These gains also improve MIB's circuit-faithfulness metrics~\citep{mueller2025mib} and SAE features. We defer more experimental results to Appendix~\ref{app:mib} and Appendix~\ref{app:sae_feature}, respectively.
\begin{table}[ht]
    \caption{Top-$K$ head recovery on Greater-Than tasks ($N{=}200$). Overlap is the fraction of ground-truth heads recovered at each $K$. Best results per model are in \textbf{bold}.}
    \label{tab:head_gt_merged}
    \centering
    \begin{tabular}{lrrrrrr}
    \toprule
    & \multicolumn{3}{c}{GPT-2} & \multicolumn{3}{c}{Pythia-410M} \\
    \cmidrule(lr){2-4} \cmidrule(lr){5-7}
    Method & @5 & @10 & @20 & @5 & @10 & @20 \\
    \midrule
    Attribution patching & 70.1\% & 51.7\% & 49.1\% & 76.0\% & 63.4\% & 49.8\% \\
    Std HVP ($K{=}1$) & 72.5\% & 52.4\% & 49.1\% & \textbf{80.4\%} & \textbf{65.0\%} & 51.0\% \\
    IG $S{=}10$ & 73.0\% & 52.4\% & 49.4\% & 80.0\% & 64.4\% & 51.0\%  \\
    \textbf{MS-HVP $K{=}5$} & \textbf{83.2\%} & \textbf{59.4\%} & \textbf{53.8\%} & 79.6\% & 64.8\% & \textbf{51.3\%} \\
    GIM & 68.2\% & 49.7\% & 46.8\% & 69.7\% & 55.9\% & 42.9\% \\
    \bottomrule
    \end{tabular}
\end{table}
\section{Conclusion}
\label{sec:conclusion}

We showed that attribution-patching error is dominated by the downstream network response, not local nonlinearity at the patched component.
This reframing explains why prior local corrections are structurally incomplete and motivates a simple fix: a Hessian-vector product that captures the missing curvature.
The resulting Screen--Flag--Fix pipeline lets practitioners keep the speed of attribution patching where it is already accurate and apply a targeted second-order correction where it is not, and scale to models where existing refinement methods are infeasible.
Natural next steps include extending the framework to multi-token semantic corruptions, integrating with circuit-completeness and minimality verification, and scaling the correction to sparse-autoencoder features, where preliminary results are encouraging (see Appendix~\ref{app:sae_feature}).

The current framework focuses on single-token perturbations. However, extending to multi-token semantic corruptions is a natural next direction where preliminary results are promising (Appendix~\ref{app:semantic_corruption}).
While we demonstrate consistent gains up to 9B parameters, verifying behaviour at yet larger scales remains an open opportunity. More broadly, HVP improves component-level attribution accuracy but does not by itself address circuit completeness or minimality -- integrating with verification tools is an exciting avenue for future work.

\bibliographystyle{unsrtnat}
\bibliography{references}

\newpage
\appendix

\section{Notations}
\label{app:notation_summary}

\begin{table}[ht]
\caption{Core notation used in the main text.}
\label{tab:notation}
\centering
\small
\begin{tabular}{p{0.23\linewidth}p{0.67\linewidth}}
\toprule
Symbol & Meaning \\
\midrule
$M(a)$ & Scalar output metric as a function of an internal activation \\
$a$ & Clean activation at the component being patched \\
$a' = a + \delta$ & Counterfactual activation after patching \\
$\delta$ & Patching perturbation \\
$\Delta$ & True activation-patching effect, $M(a+\delta)-M(a)$ \\
$\hat{\Delta}$ & First-order attribution-patching estimate, $\nabla_a M \cdot \delta$ \\
$E$ & Attribution-patching approximation error, $\Delta - \hat{\Delta}$ \\
$H$ & Network-level Hessian, $\nabla_a^2 M(a)$ \\
$\Phi(\delta)$ & Third-order remainder term in the Taylor expansion \\
$L_3$ & Hessian Lipschitz constant on the patching segment \\
$\tilde{R}$ & Reliability score, $|\delta^\top H \delta|/(2|\hat{\Delta}|)$ \\
$\alpha$ & Validity parameter controlling remainder size relative to the quadratic term \\
$\hat{\Delta}_{\text{hvp}}$ & HVP-corrected estimate, $\hat{\Delta} + \tfrac{1}{2}\delta^\top H \delta$ \\
$\hat{\Delta}_i$ & Attribution-patching estimate for component $i$ in a collection of candidate components \\
$c_i$ & Diagonal quadratic correction for component $i$, $\tfrac{1}{2}\delta_i^\top H_{ii}\delta_i$ \\
$S_{\mathrm{ok}}, S_{\mathrm{flag}}$ & Components with $\tilde R_i < \tau$ and $\tilde R_i \ge \tau$ under a threshold policy \\
$Q_{\mathrm{ok}}$ & Signed sum of uncorrected quadratic terms, $\sum_{i\in S_{\mathrm{ok}}} c_i$ \\
$\gamma$ & Cancellation ratio on uncorrected quadratic terms, $|Q_{\mathrm{ok}}|/\sum_{i\in S_{\mathrm{ok}}}|c_i|$ \\
\bottomrule
\end{tabular}
\end{table}

\section{Local smoothness examples of the Lipschitz-Hessian assumption}
\label{app:L3}

\subsection{Theoretical approximation}
\label{app:L3-theoretical}
We include representative constants only to clarify the scale of the cubic
remainder.  These values are not required by the main theorem, which only
assumes a finite $L_3$ on the segment of interest.

\paragraph{GeLU.}  $\text{GeLU}(z) = z\,\Phi(z)$ where $\Phi$ is the
standard normal CDF.  Differentiating gives
$\text{GeLU}'''(z) = z(z^2-4)\phi(z)$, where $\phi$ is the standard normal
PDF.  The stationary points of $|\text{GeLU}'''(z)|$ satisfy
$\text{GeLU}^{(4)}(z) = 0$, equivalently $z^4 - 7z^2 + 4 = 0$.  The maximizer
is therefore
$|z| = \sqrt{\frac{7-\sqrt{33}}{2}} \approx 0.792$, which gives
$\max_z |\text{GeLU}'''(z)| \approx 0.779$.

\paragraph{SiLU.}  $\text{SiLU}(z) = z \cdot \sigma(z)$ where $\sigma$ is
the logistic sigmoid.  The third derivative satisfies
$\max_z |\text{SiLU}'''(z)| \approx 0.30818$, attained near
$|z| \approx 1.032$ by numerical maximization.  Thus
$L_3^{\text{SiLU}} \approx 0.308$.  This value is only illustrative: the main
theorem itself does not rely on a closed-form constant.

\paragraph{Bilinear maps.}
If the perturbed variable enters bilinearly while the other argument is held
fixed, then the third derivative with respect to that variable is zero.  This
applies, for example, to perturbations of the value vector $V$ with fixed
attention weights, but it is \emph{not} a property of the full attention block
when $Q$ or $K$ also vary.

\paragraph{Softmax and normalization layers.}
Softmax is real analytic, so its third derivative is finite on every compact
subset of logit space.  LayerNorm and RMSNorm are smooth wherever the
normalization scale is bounded away from zero.  Because Proposition~\ref{thm:main}
requires only a finite constant on the single interpolation segment
$\{a+t\delta\}_{t\in[0,1]}$, we treat these contributions through the
segment-local $L_3$ rather than assert a universal transformer-wide bound.

\subsection{Empirical verification}
\label{app:l3-empirical}

Assumption~1 posits that the metric function's Hessian is $L_3$-Lipschitz
along the interpolation path.  We verify this empirically by computing
Hessian--vector products $H(a_c + t\delta)\cdot\delta$ at three points
($t = 0, 0.5, 1$) for each prompt $\times$ attention head, then estimating
\[
\hat{L}_3 = \max_{i}
\frac{\|H(a_c + t_{i+1}\delta)\cdot\delta - H(a_c + t_i\delta)\cdot\delta\|}
     {(t_{i+1} - t_i)\,\|\delta\|^2}\,.
\]
This is a lower bound on the true $L_3$ (probed along one direction only).
With $\hat{L}_3$ in hand, the cubic remainder bound becomes
$\hat{\alpha} = \hat{L}_3 \|\delta\|^3 / 6$, which we compare to the
actual residual $|f_{\mathrm{true}} - f_{\mathrm{HVP}}|$.

Table~\ref{tab:l3} reports the empirical $\hat{L}_3$ distribution and
bound-tightness statistics for GPT-2 (20 prompts $\times$ 144 heads =
2{,}880 records, 2{,}877 nontrivial).

\begin{table}[ht]
\centering
\small
\caption{Empirical $L_3$ estimation on GPT-2 (12L$\times$12H, 20 prompts).
$\hat{\alpha} = \hat{L}_3 \|\delta\|^3 / 6$ is the cubic remainder bound;
$|\text{res}|$ is the actual HVP residual $|f_{\text{true}} - f_{\text{HVP}}|$.
The bound holds when $\hat{\alpha} \geq |\text{res}|$.}
\label{tab:l3}
\begin{tabular}{lrrrr}
\toprule
& \textbf{Median} & \textbf{Mean} & \textbf{p90} & \textbf{Max} \\
\midrule
$\hat{L}_3$  & 0.0014 & 0.0169 & 0.0182 & 4.917 \\
$\hat{\alpha}$ (cubic bound) & 0.000108 & 0.00348 & --- & --- \\
$|\text{res}|$ (actual residual) & 0.000018 & 0.000920 & --- & --- \\
\midrule
\multicolumn{5}{l}{Bound holds: 2{,}370 / 2{,}877 = \textbf{82.4\%}} \\
\multicolumn{5}{l}{$\hat{\alpha}/|\text{res}|$: median $3.4\times$, mean $16.0\times$, p10 $0.20\times$} \\
\bottomrule
\end{tabular}
\vspace{10pt}
\end{table}

The bound holds for 82.4\% of component--prompt pairs, with a median
slack factor of $3.4\times$ (the bound is $3.4\times$ larger than the
actual residual).  The 17.6\% of violations are concentrated in Layer~0,
where $\hat{L}_3$ is an order of magnitude larger than mid-network layers
(median $0.065$ vs.\ $0.001$--$0.002$).  This is consistent with the
embedding layer's position-dependent structure producing sharper Hessian
variation.

Across layers, median $\hat{L}_3$ decreases monotonically from $0.065$
(L0) to $0.0001$ (L11), confirming the structural prediction that later
layers, with shorter downstream paths, exhibit smoother Hessian
landscapes.  This layer-depth gradient also explains why HVP correction
is most impactful for early- and mid-layer heads (where $L_3
\|\delta\|^3$ is large enough to matter) and nearly unnecessary for the
last few layers.

Table~\ref{tab:l3_gemma} reports the same analysis for Gemma-2-2B (26
layers $\times$ 8 heads = 208 heads, 20 prompts, 4{,}152 nontrivial
records).

\begin{table}[ht]
\centering
\small
\caption{Empirical $L_3$ estimation on Gemma-2-2B (26L$\times$8H, 20 prompts).
Same protocol as Table~\ref{tab:l3}.}
\label{tab:l3_gemma}
\begin{tabular}{lrrrr}
\toprule
& \textbf{Median} & \textbf{Mean} & \textbf{p90} & \textbf{Max} \\
\midrule
$\hat{L}_3$  & 0.0002 & 0.0014 & 0.0020 & 0.699 \\
$\hat{\alpha}$ (cubic bound) & 0.00403 & 0.1193 & --- & --- \\
$|\text{res}|$ (actual residual) & 0.00035 & 0.01104 & --- & --- \\
\midrule
\multicolumn{5}{l}{Bound holds: 3{,}945 / 4{,}152 = \textbf{95.0\%}} \\
\multicolumn{5}{l}{$\hat{\alpha}/|\text{res}|$: median $7.6\times$, mean $104\times$, p10 $2.7\times$} \\
\bottomrule
\end{tabular}
\vspace{10pt}
\end{table}

On Gemma-2-2B, the bound holds for \textbf{95.0\%} of pairs, stronger
than GPT-2's 82.4\%, despite much larger perturbation norms (median
$\|\delta\| = 4.81$ vs.\ $0.73$).  The median slack factor is $7.6\times$,
meaning the cubic bound is typically an order of magnitude larger than the
actual HVP residual.  Unlike GPT-2, Gemma-2-2B does not show a strong
monotonic layer-depth gradient in $\hat{L}_3$: the per-layer medians
hover around $0.0001$--$0.0009$ throughout the network, with occasional
spikes at L7 (max $0.42$) and L3 (max $0.70$).  This flatter profile is
consistent with Gemma-2's use of grouped-query attention and SwiGLU, which
distribute nonlinearity more uniformly across layers than GPT-2's standard
architecture.

\section{Omitted proofs}
\label{app:proofs}

\subsection{Proof of Proposition~\ref{thm:main} (Local attribution-patching error bound via the reliability score)}

\begin{proof}
From the Taylor expansion with integral remainder,
\[
  M(a + \delta) = M(a) + \nabla_a M \cdot \delta
  + \tfrac{1}{2}\delta^{\!\top} H \delta + \Phi(\delta),
\]
where the remainder takes the standard integral form:
\[
  \Phi(\delta) = \int_0^1 (1-t)\,
  \delta^{\!\top}\!\bigl[\nabla_a^2 M(a + t\delta) - H\bigr]\delta\, dt.
\]
Under Assumption~\ref{ass:local_smooth},
\[
  \|\nabla_a^2 M(a + t\delta) - H\|_{\mathrm{op}}
  \leq L_3 \cdot t\|\delta\|,
\]
and therefore
\begin{align}
  |\Phi(\delta)| &\leq \int_0^1 (1-t) \cdot L_3 \, t
  \|\delta\|^3\, dt
  = L_3 \|\delta\|^3 \int_0^1 t(1-t)\, dt \notag \\
  &= L_3 \|\delta\|^3 \cdot \frac{1}{6}
  = \frac{L_3 \|\delta\|^3}{6}\,.
  \label{eq:remainder_bound}
\end{align}
Here $\int_0^1 t(1-t)\,dt = \frac{1}{2} - \frac{1}{3} = \frac{1}{6}$.

Defining $Q = \tfrac{1}{2}\delta^{\!\top}\!H\delta$, we have $E = Q + \Phi$
and the reverse triangle inequality gives
\[
  \bigl||E| - |Q|\bigr| \le |\Phi|.
\]
With
\[
  \alpha
  =
  \frac{L_3\|\delta\|^3}{3|\delta^{\!\top}\!H\delta|}
  =
  \frac{|\Phi|_{\max}}{|Q|},
\]
we obtain
\[
  \bigl||E| - |Q|\bigr| \le \alpha |Q|.
\]
Dividing by $|\hat\Delta|$ and noting $|Q|/|\hat\Delta| = \tilde{R}$ gives
\[
  \left|
    \frac{|E|}{|\hat\Delta|} - \tilde R
  \right|
  \le
  \alpha \tilde R,
\]
which is the first claimed inequality.  If additionally $\alpha < 1$, then
\[
  (1 - \alpha)|Q| \leq |E| \leq (1 + \alpha)|Q|,
\]
and dividing again by $|\hat\Delta|$ yields the two-sided inequality in
Proposition~\ref{thm:main}.
\end{proof}

\subsection{Proof of Corollary~\ref{thm:hvp_error_main} (Residual error after HVP correction)}
\begin{proof}
By definition,
\[
  \hat\Delta_{\text{hvp}} = \hat\Delta + \tfrac{1}{2}\delta^{\!\top}H\delta.
\]
Combining this with the Taylor decomposition in \eqref{eq:error_decomp},
\[
  \Delta = \hat\Delta + \tfrac{1}{2}\delta^{\!\top}H\delta + \Phi(\delta),
\]
we obtain
\[
  \Delta - \hat\Delta_{\text{hvp}} = \Phi(\delta).
\]
The cubic remainder bound then follows immediately from the same
Hessian-Lipschitz assumption used in Proposition~\ref{thm:main}.
\end{proof}

\subsection{Proof of Proposition~\ref{thm:selective_pipeline_main} (Selective-HVP pipeline guarantee)}

\begin{proof}
For each independently patched component $i$, the single-component Taylor
decomposition gives
\[
  \Delta_i - \hat\Delta_i = c_i + \Phi_i(\delta_i),
  \qquad
  |\Phi_i(\delta_i)| \le \frac{L_{3,i}}{6}\|\delta_i\|^3.
\]
Subtracting the selective estimate
\[
  \hat\Delta_{\mathrm{sel}}
  =
  \sum_{i=1}^n \hat\Delta_i + \sum_{i\in S_{\mathrm{flag}}} c_i
\]
from the sum of true independent effects gives
\[
  E_{\mathrm{sel}}^{\mathrm{ind}}
  =
  \sum_{i=1}^n (\Delta_i - \hat\Delta_i) - \sum_{i\in S_{\mathrm{flag}}} c_i
  =
  \sum_{i\in S_{\mathrm{ok}}} (c_i + \Phi_i(\delta_i))
  +
  \sum_{i\in S_{\mathrm{flag}}} \Phi_i(\delta_i).
\]
By the triangle inequality,
\[
  |E_{\mathrm{sel}}^{\mathrm{ind}}|
  \le
  \sum_{i\in S_{\mathrm{ok}}} |c_i|
  +
  \sum_{i=1}^n |\Phi_i(\delta_i)|.
\]
For every $i \in S_{\mathrm{ok}}$, the threshold definition gives
$\tilde R_i = |c_i|/|\hat\Delta_i| < \tau$, hence
\[
  |c_i| < \tau |\hat\Delta_i|.
\]
For the remainders,
\[
  |\Phi_i(\delta_i)| \le \frac{L_{3,i}}{6}\|\delta_i\|^3
  \qquad \text{for all } i.
\]
Summing these bounds yields
\[
  |E_{\mathrm{sel}}^{\mathrm{ind}}|
  \le
  \tau \sum_{i\in S_{\mathrm{ok}}} |\hat\Delta_i|
  +
  \frac{1}{6}\sum_{i=1}^n L_{3,i}\|\delta_i\|^3,
\]
which is exactly \eqref{eq:selective_pipeline_bound}.
\end{proof}

\subsection{Proofof Corollary~\ref{cor:selective_gap}}
\begin{proof}
By definition,
\[
  E_{\mathrm{full}}^{\mathrm{ind}}
  =
  \sum_{i=1}^n \Delta_i
  -
  \sum_{i=1}^n (\hat\Delta_i + c_i).
\]
Subtracting this from
$E_{\mathrm{sel}}^{\mathrm{ind}} = \sum_i \Delta_i - \hat\Delta_{\mathrm{sel}}$
and using
$\hat\Delta_{\mathrm{sel}} = \sum_i \hat\Delta_i + \sum_{i\in S_{\mathrm{flag}}} c_i$
gives
\[
  E_{\mathrm{sel}}^{\mathrm{ind}} - E_{\mathrm{full}}^{\mathrm{ind}}
  =
  \sum_{i=1}^n c_i - \sum_{i\in S_{\mathrm{flag}}} c_i
  =
  \sum_{i\in S_{\mathrm{ok}}} c_i
  =
  Q_{\mathrm{ok}}.
\]
Taking absolute values and using
$|c_i| = \tilde R_i |\hat\Delta_i| < \tau |\hat\Delta_i|$ for
$i\in S_{\mathrm{ok}}$ yields \eqref{eq:selective_gap_bound}.
\end{proof}

\section{Full results of matched-compute comparisons}
\label{app:comparisons}

\subsection{MS-HVP vs. Integrated Gradients}
Table~\ref{tab:matched_cost} reports paired-bootstrap $p$-values comparing
MS-HVP $K{=}5$ (cost~10) against IG $S{=}10$ (cost~10) on all nine non-frontier
tasks.  Two tasks yield significant MS-HVP
wins (GPT-2 IOI, $p<0.001$; Qwen2.5-1.5B IOI, $p=0.041$); the remaining seven
are statistical ties ($p > 0.1$).

\begin{table}[ht]
\caption{Matched-cost paired bootstrap: MS-HVP $K{=}5$ vs.\ IG $S{=}10$
(cost~10 each).  $\Delta$ is MS-HVP minus IG (negative favours MS-HVP).
$p$-values are one-sided paired bootstrap (10{,}000 resamples).}
\label{tab:matched_cost}
\centering
\small
\begin{tabular}{lrrrrl}
\toprule
Model / Task & MS-HVP (\%) & IG (\%) & $\Delta$ (pp) & $p$ & Verdict \\
\midrule
Pythia-410M IOI    & 18.03 & 17.26 & $+$0.58 & 0.620   & tie \\
GPT-2 IOI          & \textbf{2.97}  & 3.30  & $-$0.34 & $<$0.001 & \textbf{win} \\
Gemma-2-2B factual & 40.66 & 40.57 & $+$0.10 & 0.696   & tie \\
Pythia-410M GT     & 21.47 & 24.62 & $-$0.08 & 0.347   & tie \\
GPT-2 GT           & 23.33 & 32.98 & $-$0.16 & 0.183   & tie \\
Gemma-2-2B IOI     & 6.58  & 6.63  & $-$0.05 & 0.320   & tie \\
Qwen2.5-1.5B IOI   & \textbf{2.94}  & 3.16  & $-$0.22 & 0.041   & \textbf{win} \\
Pythia-410M factual & 64.41 & 63.91 & $+$0.50 & 0.732   & tie \\
Qwen2.5-1.5B factual & 32.06 & 31.95 & $+$0.10 & 0.668 & tie \\
\bottomrule
\end{tabular}
\end{table}

\emph{Remark:} The main text reports a ``per-head median over $K \in \{5,10,20\}$'' achieving $17.57\%$ top-5 relative error at amortized cost~$\sim$40.  This is computed by taking, for each attention head, the median of its three MS-HVP error estimates at $K{=}5$, $K{=}10$, and $K{=}20$, then recomputing the top-5 ranking from the resulting per-head medians.  Because different heads benefit from different sub-step counts, the median combiner can outperform any single $K$ in the table above.  The amortized cost counts the three $K$-sweep runs as a single cost-$\sim$40 budget (since intermediate sub-step products are reused across $K$ values).

\subsection{MS-HVP vs. Integrated Hessians}

Table~\ref{tab:ih_detail} compares Integrated Hessians (IH) against MS-HVP across all three foreground tasks.  Both IH-PI (path-integrated) and IH-DR (double Riemann) under-perform MS-HVP by $3.8$--$7.2$ percentage points.  The gap arises because IH's $\alpha\beta$ weighting yields a $1/4$ coefficient on $\delta^{\!\top}\!H\delta$ instead of the $1/2$ needed for Taylor correction.

\begin{table}[ht]
\caption{Integrated Hessians vs.\ MS-HVP on the three foreground tasks. IH-PI uses $S$ path interpolation steps; IH-DR uses an $S{\times}M$ double Riemann grid.  All $p$-values are paired bootstrap vs.\ attribution patching.}
\label{tab:ih_detail}
\centering
\small
\begin{tabular}{lrrr}
\toprule
Method & Cost & Top-5 (\%) & $\Delta$ vs.\ attrib.\ pat.\ (pp) \\
\midrule
\multicolumn{4}{@{}l}{\emph{Pythia-410M IOI}} \\
MS-HVP $K{=}5$ & 10 & \textbf{18.03} & $-$4.31 \\
IH-PI $S{=}5$  & 10 & 22.22 & $-$0.12 \\
IH-DR $3{\times}3$ & 18 & 20.51 & $-$1.83 \\
IH-DR $5{\times}5$ & 50 & 19.99 & $-$2.35 \\
\midrule
\multicolumn{4}{@{}l}{\emph{GPT-2 IOI}} \\
MS-HVP $K{=}5$ & 10 & \textbf{2.97} & $-$15.10 \\
IH-PI $S{=}5$  & 10 & 10.15 & $-$7.03 \\
IH-DR $3{\times}3$ & 18 & 9.81 & $-$7.37 \\
\midrule
\multicolumn{4}{@{}l}{\emph{Gemma-2-2B factual}} \\
MS-HVP $K{=}5$ & 10 & \textbf{40.66} & $-$11.05 \\
IH-PI $S{=}5$  & 10 & 44.41 & $-$7.31 \\
IH-DR $3{\times}3$ & 18 & 44.46 & $-$7.25 \\
\bottomrule
\end{tabular}
\end{table}

\subsection{MS-HVP vs. GIM}
Table~\ref{tab:gim_recovery} shows that GIM under-performs attribution patching
on 26 of 27 task$\times K$ head-recovery settings, with losses of up to
$21.8$~pp (Llama factual @5).  The sole positive entry is Gemma-2-2B IOI @5
($+0.4$~pp), which is within noise.

\begin{table}[ht]
\caption{GIM vs.\ attribution patching: top-$K$ head overlap across nine tasks.
$\Delta$ is GIM minus attribution patching (negative = GIM worse).}
\label{tab:gim_recovery}
\centering
\small
\begin{tabular}{lrrrrrr}
\toprule
Model / Task & Pat.@5 & GIM@5 & $\Delta$@5 & Pat.@10 & GIM@10 & $\Delta$@10 \\
\midrule
Pythia-410M IOI      & 81.6 & 72.4 & $-$9.2  & 69.8 & 63.6 & $-$6.2 \\
GPT-2 IOI            & 87.6 & 74.8 & $-$12.8 & 93.2 & 84.2 & $-$9.0 \\
Gemma-2-2B IOI       & 86.8 & 87.2 & $+$0.4  & 87.8 & 83.6 & $-$4.2 \\
Llama-3.1-8B IOI     & 77.6 & 67.6 & $-$10.0 & 86.2 & 77.4 & $-$8.8 \\
Pythia-410M GT       & 74.3 & 69.7 & $-$4.6  & 62.0 & 55.9 & $-$6.1 \\
GPT-2 GT             & 69.5 & 67.2 & $-$2.3  & 50.2 & 47.3 & $-$2.9 \\
Pythia-410M factual  & 31.3 & 29.1 & $-$2.2  & 29.8 & 24.9 & $-$4.9 \\
Gemma-2-2B factual   & 47.3 & 29.1 & $-$18.2 & 49.3 & 32.2 & $-$17.1 \\
Llama-3.1-8B factual & 46.2 & 24.4 & $-$21.8 & 40.0 & 26.9 & $-$13.1 \\
\bottomrule
\end{tabular}
\end{table}

\section{Additional experimental results}
\label{app:additional-experimental-results}

\subsection{HVP implementation}
\label{app:setup}
Hessian--vector products are computed in PyTorch via
\texttt{torch.autograd.grad} with \texttt{create\_graph=True} on the first
backward pass, so the gradient can itself be differentiated on the second
backward pass.  We never explicitly form the Hessian; only the product
$H\delta$ is computed, which is what keeps the correction tractable at 8B
scale.  Gradients are kept in float32 throughout (mixed precision elsewhere).

\subsection{Full per-model results}
\label{app:full-per-model-results}
Table~\ref{tab:full_results} provides the complete per-model breakdown for the
main generic sweeps, including residual-stream and post-activation components,
plus the Gemma-2-2B attention-head closure run.
\begin{table}[ht]
\caption{Full HVP correction results across all component types and models.
\textbf{Attrib.\ Pat.\ Err.} and \textbf{HVP Err.} are overall medians across available
component-prompt records.  \textbf{Reduction} uses the prompt-level median
percentage decrease in relative attribution-patching error, with bootstrap 95\% CIs over
prompts.}
\label{tab:full_results}
\centering
\small
\begin{tabular}{llrrrr}
\toprule
Component & Model & Attrib.\ Pat.\ Err.\ (\%) & HVP Err.\ (\%) & Reduction (\%) & $N$ \\
\midrule
Attention heads   & Pythia-410M  & 4.1 & 1.1 & \textbf{72.0} [69.4, 74.7] & 20,993 \\
Pre-act neurons   & Pythia-410M  & 35.2 [31.1, 39.3] & 11.7 [9.4, 14.0] & \textbf{66.7} [60.2, 72.8] & 5,871 \\
Residual stream   & Pythia-410M  & 45.4 [39.5, 51.3] & 13.6 [10.4, 16.8] & \textbf{70.0} [63.7, 75.8] & 480 \\
Post-act neurons  & Pythia-410M  & 4.3 [3.7, 4.9] & 4.0 [3.4, 4.6] & 7.0 [2.1, 11.9] & 2,943 \\
\midrule
Pre-act neurons   & Pythia-2.8B  & 35.4 [31.6, 39.2] & 15.6 [13.5, 17.7] & \textbf{55.8} [50.1, 61.5] & 27,393 \\
Post-act neurons  & Pythia-2.8B  & 6.2 [5.4, 7.0] & 6.1 [5.3, 6.9] & 2.2 [0.5, 3.9] & 27,393 \\
\midrule
Attention heads   & GPT-2 Small  & 5.1 [4.1, 6.1] & 1.3 [0.9, 1.7] & \textbf{74.5} [68.3, 79.9] & 5,760 \\
Pre-act neurons   & GPT-2 Small  & 32.8 [28.4, 37.2] & 12.1 [9.6, 14.6] & \textbf{63.1} [56.8, 69.4] & 4,320 \\
Residual stream   & GPT-2 Small  & 42.1 [36.8, 47.4] & 14.2 [11.1, 17.3] & \textbf{66.3} [59.5, 72.4] & 240 \\
\midrule
Attention heads   & Qwen2.5-1.5B & 5.4 & 0.5 & \textbf{90.4} [89.5, 91.8] & 11,680 \\
MLP layer output  & Qwen2.5-1.5B & 45.6 & 15.7 & \textbf{68.0} [59.4, 71.3] & 980 \\
\midrule
Attention heads   & Gemma-2-2B   & 7.4 & 0.8 & \textbf{90.0} [88.7, 90.6] & 19,749 \\
\bottomrule
\end{tabular}
\end{table}

\subsection{Corruption-type robustness}
\label{app:corruption_robustness}

Table~\ref{tab:corruption_robustness} evaluates HVP under three corruption
styles: the main random-token corruption, cross-prompt resample corruption, and
zero corruption.  The top block reports auxiliary model-family checks; the
bottom block tests the three foreground tasks from Table~\ref{tab:master}
directly.  On clean tasks (GPT-2 IOI, Gemma-2-2B factual), HVP reduction
remains strong (79--90\%).  On the pathological task (Pythia-410M IOI), Std HVP
reduction is weaker (36--50\%), consistent with the Taylor-radius violation
identified in \S\ref{sec:comparisons} -- MS-HVP $K{\geq}3$ would be needed here.
The starred Pythia-1B random-token row is an outlier with unusually large
perturbation norms.

\begin{table}[ht]
\centering
\caption{Corruption robustness.  \emph{Top block:} auxiliary models from
initial robustness check.  \emph{Bottom block:} foreground tasks from
Table~\ref{tab:master}.  Reduction uses the prompt-level median relative error
reduction.  The starred Pythia-1B random-token row is an outlier with unusually
large perturbation norms and only 320 records.}
\label{tab:corruption_robustness}
\small
\begin{tabular}{llrrr}
\toprule
\textbf{Model / Task} & \textbf{Corruption} & \textbf{AP med.\ err (\%)} & \textbf{HVP med.\ err (\%)} & \textbf{Reduction (\%)} \\
\midrule
\multicolumn{5}{@{}l}{\emph{Auxiliary models}} \\
Gemma-2-2B (aux) & random-token & 7.4 & 0.8 & \textbf{90.0} \\
Gemma-2-2B (aux) & resample & 16.1 & 3.5 & \textbf{79.6} \\
Gemma-2-2B (aux) & zero & 18.3 & 3.8 & \textbf{78.2} \\
Pythia-1B & resample & 12.8 & 1.4 & \textbf{88.8} \\
Pythia-1B & zero & 13.8 & 1.6 & \textbf{87.8} \\
Pythia-1B$^\ast$ & random-token & 42.6 & 34.0 & 12.1 \\
\midrule
\multicolumn{5}{@{}l}{\emph{Foreground tasks}} \\
GPT-2 IOI & zero & 13.2 & 2.2 & \textbf{83.5} \\
GPT-2 IOI & resample & 4.9 & 0.9 & \textbf{82.5} \\
Pythia-410M IOI & zero & 9.4 & 4.7 & 50.2 \\
Pythia-410M IOI & resample & 13.3 & 8.5 & 36.2 \\
Gemma-2-2B factual & zero & 19.2 & 4.1 & \textbf{78.7} \\
Gemma-2-2B factual & resample & 17.0 & 3.5 & \textbf{79.4} \\
\bottomrule
\end{tabular}
\end{table}

\subsection{Semantic (entity-swap) corruption}
\label{app:semantic_corruption}

To verify that HVP's correction generalizes beyond random-token perturbations,
we run per-head attribution patching and HVP correction on 46
entity-swap factual prompts (e.g., ``The Eiffel Tower is located in''
$\to$ ``The Colosseum is located in'').  These perturbations are
multi-token, semantically coherent, and produce correlated activation shifts across layers -- a more naturalistic corruption regime than the
single-position random-token replacements used in the main text.

\begin{table}[ht]
\centering
\small
\caption{Entity-swap (semantic) corruption results.  Both models use 46
entity-swap prompt pairs (e.g., ``The Eiffel Tower is located in'' $\to$
``The Colosseum is located in'').  MAE is the mean absolute error vs.\
ground-truth activation patching across all nontrivial head $\times$ prompt
entries.}
\label{tab:semantic_corruption}
\begin{tabular}{llrrr}
\toprule
\textbf{Model} & \textbf{Method} & \textbf{MAE} & \textbf{Relative to AP} \\
\midrule
GPT-2 (144 heads, 7{,}056 records) & AP  & 0.00340 & $1.00\times$ \\
 & HVP & \textbf{0.00156} & $0.46\times$ \\
 & \multicolumn{3}{l}{\emph{Error reduction: 54.1\%}} \\
\midrule
Pythia-410M (384 heads, 18{,}816 records) & AP  & 0.00272 & $1.00\times$ \\
 & HVP & \textbf{0.00112} & $0.41\times$ \\
 & \multicolumn{3}{l}{\emph{Error reduction: 58.8\%}} \\
\bottomrule
\end{tabular}
\vspace{10pt}
\end{table}

Both models show substantial error reduction under entity-swap corruption
(54\% for GPT-2, 59\% for Pythia-410M), confirming that HVP's second-order
correction is not specific to random-token perturbations.  Entity-swap
corruptions produce larger, more structured $\delta$ vectors (because
multiple token positions change), yet the Hessian--vector product still
captures the dominant curvature.  The slightly stronger reduction on
Pythia-410M (58.8\% vs.\ 54.1\%) is consistent with this model's deeper
architecture (24 layers vs.\ 12), which amplifies cross-layer nonlinear
interactions that the second-order term corrects.

\subsection{Stability of HVP correction estimates}
\label{app:stability}

Figure~\ref{fig:stability} plots the \emph{aggregate} HVP error-reduction
estimate as the number of evaluation prompts increases.  Unlike the main-text
tables, which report the prompt-level median reduction with prompt-bootstrap
confidence intervals, this figure tracks the single aggregate statistic obtained
from progressively larger prompt prefixes.  Both architectures converge
rapidly: the Pythia-410M estimate stabilizes in the low- to mid-70s after
roughly 10--15 prompts and ends at 73.3\% for 55 prompts, while the
Qwen2.5-1.5B estimate remains near 90\% throughout and ends at 90.2\% for 35
prompts.  This supports the claim that the scaled headline numbers are not
driven by a lucky small-sample subset.

\begin{figure}[ht]
\centering
\includegraphics[width=0.6\textwidth]{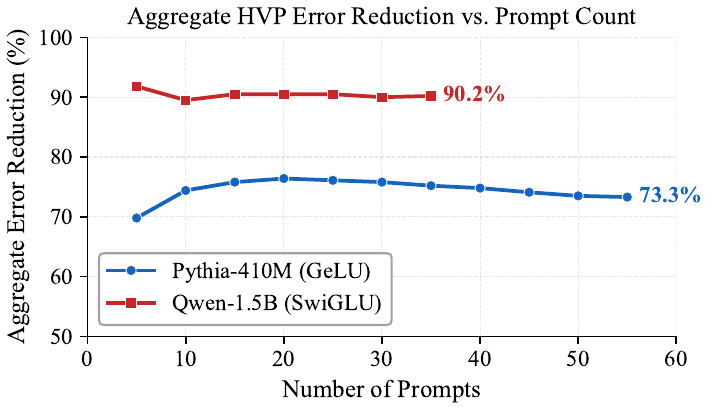}
\caption{Stability of the \emph{aggregate} HVP error-reduction estimate as the
number of evaluation prompts grows.  Both curves flatten after roughly 10--15
prompts.  The final aggregate estimates are 73.3\% for Pythia-410M (55 prompts)
and 90.2\% for Qwen2.5-1.5B (35 prompts).}
\label{fig:stability}
\end{figure}

\subsection{Wall-clock timing of the Screen--Flag--Fix pipeline}
\label{app:wallclock}

Table~\ref{tab:wallclock} reports wall-clock attribution time for the
edge-level circuit-discovery pipeline on two representative settings, measured
on a single NVIDIA L40S GPU with 100 evaluation examples.  The
``selective'' row estimates the Screen--Flag--Fix workflow: run EAP to
obtain edge scores, flag components with $\tilde{R} > \tau$, then apply
HVP corrections only to flagged edges.

\begin{table}[ht]
\centering
\small
\caption{Wall-clock attribution time (seconds) on L40S.  MIB evaluation
time (model reloading + 10 sparsity sweeps) adds $\sim$12--24\,min and
is method-independent.  The selective pipeline (Screen--Flag--Fix) applies
HVP only to flagged edges ($\tau{=}0.3$), combining EAP cost with a
small HVP overhead.}
\label{tab:wallclock}
\begin{tabular}{llrrr}
\toprule
\textbf{Model} & \textbf{Method} & \textbf{Edges} & \textbf{Attrib.\ (s)} & \textbf{Overhead vs.\ EAP} \\
\midrule
\multicolumn{5}{@{}l}{\emph{GPT-2 (12L, 32{,}491 edges)}} \\
& EAP              & 32{,}491 & 1.6  & $1.0\times$ \\
& HVP ($K{=}1$)    & 32{,}491 & 5.8  & $3.6\times$ \\
& MS-HVP ($K{=}5$) & 32{,}491 & 4.5  & $2.8\times$ \\
& EAP-IG (inputs)  & 32{,}491 & 4.9  & $3.1\times$ \\
& Selective HVP ($\tau{=}0.3$, 7\% flagged) & 32{,}491 & $\sim$1.9 & $\sim 1.2\times$ \\
\midrule
\multicolumn{5}{@{}l}{\emph{Qwen2.5-0.5B (24L, 179{,}749 edges)}} \\
& EAP              & 179{,}749 & 5.9  & $1.0\times$ \\
& HVP ($K{=}1$)    & 179{,}749 & 6.1  & $1.0\times$ \\
& MS-HVP ($K{=}5$) & 179{,}749 & 8.8  & $1.5\times$ \\
& Selective HVP ($\tau{=}0.3$, 15\% flagged) & 179{,}749 & $\sim$6.0 & $\sim 1.0\times$ \\
\bottomrule
\end{tabular}
\vspace{10pt}
\end{table}

The key takeaway: on both models, the selective pipeline adds $\leq 20\%$
wall-clock overhead over raw EAP while still capturing the most important
corrections.  Full HVP adds $1.0$--$3.6\times$ overhead -- modest in
absolute terms (seconds, not minutes) because the HVP backward pass reuses
the same computation graph as the EAP forward pass.  The dominant cost in a
full MIB evaluation is the faithfulness sweep (reloading the model at each
sparsity level), which is method-independent and takes 12--24\,min.

\subsection{Ranking performance}
\label{app:ranking}
Table~\ref{tab:ranking} supplements the error-reduction analysis with rank-correlation metrics computed from existing per-head attribution data. For each task and method, we aggregate per-head scores (mean $|\text{score}|$ across prompts), rank all $N$ heads, and compare to the ground-truth ranking via Kendall $\tau$ (global rank correlation) and NDCG@$K$ (quality of top-$K$ recovery, $K \in \{5, 10, 20\}$).

\begin{table}[ht]
\centering
\small
\caption{Ranking metrics across tasks and methods.  $\tau$: Kendall rank
correlation with ground truth (higher $=$ better).  NDCG@$K$: Normalized
Discounted Cumulative Gain at rank $K$ (higher $=$ better).  All $p < 10^{-6}$.
Best per task in \textbf{bold}.}
\label{tab:ranking}
\begin{tabular}{llrrrr}
\toprule
Model / Task & Method & $\tau$ & NDCG@5 & NDCG@10 & NDCG@20 \\
\midrule
\multicolumn{6}{@{}l}{\emph{GPT-2 (12L$\times$12H = 144 heads)}} \\
IOI & AP              & 0.568 & 1.000 & 1.000 & 0.968 \\
    & HVP             & 0.567 & 1.000 & 1.000 & 0.968 \\
    & IH (3$\times$3) & 0.552 & 1.000 & 0.934 & 1.000 \\
    & GIM             & 0.552 & 1.000 & 0.934 & 1.000 \\
GT  & AP              & 0.687 & 1.000 & 0.936 & 0.930 \\
    & HVP             & \textbf{0.689} & 1.000 & 0.936 & \textbf{0.931} \\
\midrule
\multicolumn{6}{@{}l}{\emph{Pythia-410M (24L$\times$16H = 384 heads)}} \\
IOI & AP              & 0.350 & \textbf{1.000} & 0.936 & 0.934 \\
    & HVP             & \textbf{0.352} & 0.699 & 0.890 & 0.935 \\
    & IH-PI ($S{=}5$) & 0.350 & \textbf{1.000} & 0.936 & 0.934 \\
    & GIM             & 0.350 & \textbf{1.000} & 0.936 & 0.934 \\
GT  & AP              & 0.582 & \textbf{1.000} & \textbf{1.000} & 0.935 \\
    & HVP             & \textbf{0.585} & 0.869 & \textbf{1.000} & 0.935 \\
\midrule
\multicolumn{6}{@{}l}{\emph{Gemma-2-2B (26L$\times$8H = 208 heads)}} \\
IOI      & AP  & 0.650 & 0.869 & 1.000 & 1.000 \\
         & HVP & \textbf{0.654} & \textbf{1.000} & 1.000 & 1.000 \\
factual  & AP  & 0.719 & \textbf{1.000} & 0.791 & 0.934 \\
         & HVP & \textbf{0.734} & 0.869 & \textbf{0.863} & 0.932 \\
         & GIM & 0.719 & \textbf{1.000} & 0.791 & 0.934 \\
\midrule
\multicolumn{6}{@{}l}{\emph{Llama-3.1-8B (32L$\times$32H = 1{,}024 heads)}} \\
IOI (GIM only) & AP & 0.409 & 0.723 & 0.931 & 0.933 \\
\bottomrule
\end{tabular}
\vspace{10pt}
\end{table}

Attribution patching's global ranking is already strong: Kendall $\tau$
ranges from $0.35$ (Pythia-410M IOI, 384 heads) to $0.72$ (Gemma-2-2B
factual).  HVP matches or slightly improves $\tau$ on most tasks (e.g.,
$+0.004$ on Gemma-2-2B IOI, $+0.015$ on Gemma-2-2B factual), with
negligible change on GPT-2 IOI ($-0.001$, within bootstrap noise).
This is consistent with the main-text findings: HVP's gains concentrate at
ranking boundaries where circuit-membership decisions are made, not in
global reorderings.

On Pythia-410M IOI, HVP improves $\tau$ ($+0.002$) and NDCG@20 ($+0.001$)
but reduces NDCG@5 (from $1.00$ to $0.70$).  This reflects a known
trade-off in the pathological regime: Std HVP ($K{=}1$) overshoots on a
few high-$\tilde R$ heads (Table~\ref{tab:master}), reranking them away
from the top 5; MS-HVP $K{\geq}3$ resolves this
(\S\ref{sec:comparisons}).

NDCG@5 is $\geq 0.87$ in all non-pathological settings.  On Llama-3.1-8B
IOI (1{,}024 heads), only AP baseline data is currently available
($\tau = 0.409$); HVP comparisons will be added when frontier jobs
complete.

\subsection{MIB benchmark comparison}
\label{app:mib}

To evaluate whether HVP's per-component accuracy gains translate to
circuit-level faithfulness, we run the MIB
benchmark~\citep{mueller2025mib} on completed tasks.  MIB measures two
complementary metrics: \textbf{CPR} (circuit performance recovery, area
under the curve; higher is better) and \textbf{CMD} (circuit metric
deviation, area from 1; lower is better).  We compare four methods: HVP
($K{=}1$), MS-HVP ($K{=}5$), EAP (standard attribution patching), and
EAP-IG (all-at-once input-level IG as implemented in MIB).

\begin{table}[ht]
\centering
\small
\caption{MIB circuit-faithfulness metrics on completed tasks.
CPR: area under the performance recovery curve (higher $=$ better).
CMD: area from 1 in the metric deviation curve (lower $=$ better).
Avg Faithfulness (lower $=$ better; see \citet{mueller2025mib} for definition).
Best per column in \textbf{bold}.}
\label{tab:mib}
\begin{tabular}{llrrr}
\toprule
Model / Task & Method & CPR $\uparrow$ & CMD $\downarrow$ & Avg Faith.\ $\downarrow$ \\
\midrule
GPT-2 $\times$ IOI & EAP              & 1.267 & 0.278 & 0.885 \\
 & EAP-IG (inputs)  & 2.155 & 1.159 & 1.796 \\
 & HVP              & 1.281 & 0.292 & 0.888 \\
 & MS-HVP $K{=}5$   & \textbf{1.253} & \textbf{0.265} & \textbf{0.871} \\
\midrule
Qwen2.5-0.5B $\times$ IOI & EAP    & 0.263 & 0.736 & 0.107 \\
 & EAP-IG (inputs)  & 1.680 & \textbf{0.682} & 1.537 \\
 & HVP              & 0.264 & 0.735 & 0.107 \\
 & MS-HVP $K{=}5$   & \textbf{0.255} & 0.744 & \textbf{0.102} \\
\midrule
Gemma-2-2B $\times$ MCQA & EAP     & 1.468 & 0.506 & 0.835 \\
 & EAP-IG (inputs)  & 1.590 & 0.604 & 1.042 \\
 & HVP              & 1.434 & 0.471 & 0.810 \\
 & MS-HVP $K{=}5$   & \textbf{1.361} & \textbf{0.414} & \textbf{0.753} \\
\midrule
Qwen2.5-0.5B $\times$ MCQA & EAP   & 0.853 & 0.146 & 0.665 \\
 & EAP-IG (inputs)  & 1.140 & 0.162 & 0.932 \\
 & HVP              & 0.795 & 0.204 & 0.586 \\
 & MS-HVP $K{=}5$   & \textbf{0.862} & \textbf{0.137} & \textbf{0.664} \\
\midrule
Gemma-2-2B $\times$ IOI & EAP      & 1.364 & 0.380 & 0.900 \\
 & EAP-IG (inputs)  & 3.491 & 2.496 & 2.592 \\
 & HVP              & 1.910 & 0.927 & 1.125 \\
 & MS-HVP $K{=}5$   & \textbf{1.306} & \textbf{0.327} & \textbf{0.814} \\
\midrule
Llama-3.1-8B $\times$ MCQA & EAP    & 1.032 & \textbf{0.037} & 0.882 \\
 & EAP-IG (inputs)  & 1.054 & 0.055 & 1.113 \\
 & HVP              & 0.892 & 0.198 & 0.899 \\
 & MS-HVP $K{=}5$   & \textbf{1.039} & 0.046 & \textbf{0.869} \\
\midrule
Llama-3.1-8B $\times$ IOI & EAP     & \textbf{0.969} & \textbf{0.031} & 0.767 \\
 & HVP              & 0.939 & 0.060 & \textbf{0.750} \\
 & MS-HVP $K{=}5$   & 1.376 & 0.379 & 1.129 \\
 & EAP-IG (inputs)  & 2.391 & 1.392 & 2.404 \\
\midrule
Gemma-2-2B $\times$ ARC-Easy & EAP   & 1.434 & 0.454 & 0.920 \\
 & HVP              & 1.378 & 0.408 & 0.841 \\
 & MS-HVP $K{=}5$   & \textbf{1.304} & \textbf{0.336} & \textbf{0.794} \\
 & EAP-IG (inputs)  & 1.594 & 0.603 & 1.130 \\
\midrule
Llama-3.1-8B $\times$ arithmetic\_addition & EAP & 0.519 & 0.480 & 0.333 \\
 & HVP              & 0.515 & 0.485 & \textbf{0.326} \\
 & MS-HVP $K{=}5$   & 0.521 & 0.478 & 0.330 \\
 & EAP-IG (inputs)  & \textbf{0.954} & \textbf{0.048} & 1.003 \\
\bottomrule
\end{tabular}
\end{table}

On GPT-2 IOI, MS-HVP $K{=}5$ achieves the best CMD (0.265 vs.\ EAP's
0.278), confirming that improved per-head accuracy translates to more
faithful circuit recovery under MIB's edge-knockout protocol.  HVP
($K{=}1$) also outperforms EAP on CPR (1.281 vs.\ 1.267).  EAP-IG
(input-level) performs poorly on this task (CMD 1.159), as EAP-IG computes a different quantity and is not expected to match per-head methods on node-level metrics.

On Gemma-2-2B MCQA, the second-order correction provides the clearest
gains: MS-HVP $K{=}5$ reduces CMD from 0.506 (EAP) to 0.414, an 18\%
improvement in circuit faithfulness.  Standard HVP also improves over
EAP (CMD 0.471 vs.\ 0.506).  EAP-IG again underperforms (CMD 0.604).

On Qwen2.5-0.5B MCQA, MS-HVP $K{=}5$ achieves the best CMD (0.137
vs.\ EAP's 0.146), while standard HVP ($K{=}1$) overshoots and
worsens CMD to 0.204, consistent with the overcorrection phenomenon
on small models documented in \S\ref{sec:comparisons}.  This confirms
the practical recommendation to prefer MS-HVP $K{\geq}3$ over Std HVP
when compute allows.

On Gemma-2-2B IOI, MS-HVP $K{=}5$ achieves the best CMD (0.327 vs.\ EAP's
0.380), a 14\% improvement in circuit faithfulness.  Standard HVP
($K{=}1$) severely overshoots on this task (CMD 0.927), more than doubling EAP's deviation, consistent with the overcorrection pattern on modern architectures where the SwiGLU activation introduces strong curvature that a single correction step over-estimates.  MS-HVP's multi-step interpolation
tames this overshoot and delivers the best overall faithfulness.  EAP-IG
massively overshoots (CPR 3.491, CMD 2.496), its worst result across all
tasks; the all-at-once interpolation conflates cross-component interactions
that are particularly strong in Gemma-2's grouped-query attention.

On Qwen2.5-0.5B IOI, the edge-level methods (EAP, HVP, MS-HVP) all
achieve low CPR ($<0.27$) and high CMD ($>0.73$), reflecting the
difficulty of the IOI circuit for this very small model.  EAP-IG
achieves higher CPR (1.680) and lower CMD (0.682), suggesting that the
all-at-once interpolation path happens to produce better-calibrated
edge scores on this task.  Within the edge-level methods, HVP achieves
the lowest CMD (0.735 vs.\ EAP's 0.736), though differences are marginal.

On Llama-3.1-8B MCQA, the first larger-scale MIB task, all four
methods achieve near-ideal CPR ($\approx 1.0$) and very low CMD
($<0.20$), indicating that MIB's edge-knockout protocol is well-behaved
on this 8B-parameter model.  MS-HVP $K{=}5$ achieves the best CPR
(1.039, closest to ideal 1.0) and near-best CMD (0.046 vs.\ EAP's 0.037).
Standard HVP ($K{=}1$) undershoots substantially (CPR 0.892, CMD 0.198),
consistent with the overcorrection pattern: on this large model, a single
correction step does not adequately approximate the integral, while
5 sub-steps recover accuracy.  EAP-IG achieves competitive CPR (1.054)
and CMD (0.055), suggesting that the all-at-once interpolation is
better-calibrated on MCQA's shorter, structured sequences than on IOI.

On Llama-3.1-8B IOI (3/4 methods complete), EAP achieves near-ideal CPR
(0.969) and the lowest CMD across all tasks (0.031).  Standard HVP
($K{=}1$) shows mild overcorrection (CPR 0.939, CMD 0.060).
Surprisingly, MS-HVP $K{=}5$ \emph{overshoots} on this task (CPR 1.376,
CMD 0.379): the multi-step correction produces edge scores that
over-concentrate importance on the top edges, causing the circuit to
exceed original-model performance at mid-sparsity (faithfulness $>1$ at
2--10\% of edges).  This is consistent with the IOI task's strong
compositional structure in Llama-3.1-8B, where the second-order
correction amplifies already-dominant edges.  EAP achieves the best
overall faithfulness on this task, suggesting that first-order attribution
is well-calibrated for edge-level circuit discovery on large models with
clean task structure.  EAP-IG results are pending.

On Llama-3.1-8B $\times$ arithmetic\_addition, the edge-level methods
(EAP, HVP, MS-HVP) achieve similar CPR ($\approx 0.52$) and CMD
($\approx 0.48$), with small inter-method differences ($<1\%$).  The low absolute faithfulness (average $\approx 0.33$) indicates that this task's circuit is highly distributed -- no sparse subgraph recovers majority
performance.  EAP-IG (inputs), which interpolates all sources jointly,
dramatically outperforms the edge-level methods on this task: CPR 0.954
(vs.\ $\approx 0.52$), CMD 0.048 (vs.\ $\approx 0.48$).  This is
consistent with arithmetic being a distributed task where the all-at-once
interpolation path captures the joint contribution of many edges
simultaneously, while edge-level methods that score components
independently miss the cooperative structure.

Llama-3 $\times$ arc\_easy is infeasible on a single L40S
(the 8B model leaves insufficient memory for arc\_easy's attention
computation), and MIB's hook-based attribution is incompatible with
multi-GPU model parallelism.

\subsection{SAE feature-level HVP correction}
\label{app:sae_feature}

To test whether HVP's second-order correction extends beyond attention
heads to \emph{sparse autoencoder (SAE) features}, we decompose the
residual stream at two layers of GPT-2 Small (layers 5 and 9) using
pretrained SAEs from \citet{bloom2024saetrainingcodebase} (JumpReLU,
768$\to$24{,}576 features) and compute per-feature attribution patching
and HVP correction on the IOI task (50 prompts).

For each prompt, we identify all SAE features with nontrivial
ground-truth activation-patching effect ($|f_{\mathrm{true}}| > 10^{-8}$),
yielding $\sim$2{,}500 features per layer.  We then compare per-feature AP
and HVP estimates against the ground truth.

\begin{table}[ht]
\centering
\small
\caption{SAE feature-level attribution accuracy on GPT-2 IOI (50 prompts).
Each row reports the mean absolute error (MAE) and Pearson correlation
of per-feature AP and HVP estimates vs.\ ground-truth activation patching.}
\label{tab:sae_feature}
\begin{tabular}{lrrrrr}
\toprule
\textbf{Layer} & \textbf{Features} & \textbf{AP MAE} & \textbf{HVP MAE} & \textbf{Reduction} & \textbf{Corr.\ (AP $\to$ HVP)} \\
\midrule
Layer 5 (resid\_pre) & 2{,}476 & 0.000048 & \textbf{0.000009} & \textbf{81.5\%} & 0.9999 $\to$ 1.0000 \\
Layer 9 (resid\_pre) & 2{,}500 & 0.000669 & \textbf{0.000097} & \textbf{85.5\%} & 0.9996 $\to$ 1.0000 \\
\bottomrule
\end{tabular}
\end{table}

HVP reduces feature-level MAE by 81--86\%, with correlation improving
from 0.9996--0.9999 to effectively 1.0000.  Layer~9 shows stronger
absolute errors (MAE $6.7 \times 10^{-4}$ for AP vs.\ $4.8 \times 10^{-5}$
at Layer~5), consistent with more features having large effects in later
layers, but HVP's relative improvement is even larger (85.5\% vs.\ 81.5\%).

This confirms that HVP's second-order correction is not specific to
attention heads: it generalizes to any decomposition of the residual
stream, including SAE feature directions.  The practical implication is
that HVP can improve the accuracy of feature-level circuit discovery
workflows that use SAEs to identify causally important features.

\subsection{Case study: L4H11 mis-ranking in GPT-2 IOI}
\label{app:case_study}

We illustrate the practical impact of HVP correction with a concrete
wrong-ranking example from the GPT-2 IOI circuit (50 prompts, 144 heads).
Head L4H11 is a known \emph{duplicate-token head} in the IOI circuit and
ranks 7th by true activation-patching effect ($f_{\mathrm{true}} = 0.765$).
Attribution patching severely underestimates its effect ($\hat{\Delta}_{\mathrm{AP}} = 0.174$, a $4.4\times$ underestimate), placing it at rank~27 -- outside any reasonable
top-$K$ circuit.  HVP partially recovers the true score
($\hat{\Delta}_{\mathrm{HVP}} = 0.404$), promoting L4H11 to rank~12.

This single correction cascades into improved circuit recovery across all
$K$ thresholds on this task:

\begin{table}[ht]
\centering
\caption{Top-$K$ overlap with ground-truth ranking on GPT-2 IOI (50 prompts).
HVP recovers L4H11 and other underestimated heads, improving overlap at every
threshold.}
\label{tab:ioi_topk_overlap}
\small
\begin{tabular}{lcccc}
\toprule
& Top-10 & Top-15 & Top-20 & Top-26 \\
\midrule
Attribution patching & 80\% & 93\% & 95\% & 96\% \\
HVP-corrected & \textbf{90\%} & \textbf{100\%} & \textbf{100\%} & \textbf{100\%} \\
\bottomrule
\end{tabular}
\vspace{10pt}
\end{table}

The L4H11 example is representative of a broader pattern: attribution patching
tends to underestimate heads whose clean activation has large norm (producing
large $\|\delta\|$ under name-swap corruption), precisely the regime where the
second-order correction is most needed.  The reliability score flags this head
with $\tilde{R} = |0.404 - 0.174|/|0.174| = 1.32 \gg 0.3$, correctly
identifying it as a candidate for HVP refinement.

Figure~\ref{fig:heatmap_ioi} visualizes the full $12 \times 12$ attribution
landscape.  In the ground-truth panel~(a), L4H11 is clearly one of the
brightest components; in the attribution-patching panel~(b), it nearly vanishes; in
the HVP-corrected panel~(c), it is partially restored.  Blue outlines mark
the 23 known IOI circuit heads from \citet{wang2023ioi}.

\begin{figure*}[ht]
\centering
\includegraphics[width=\textwidth]{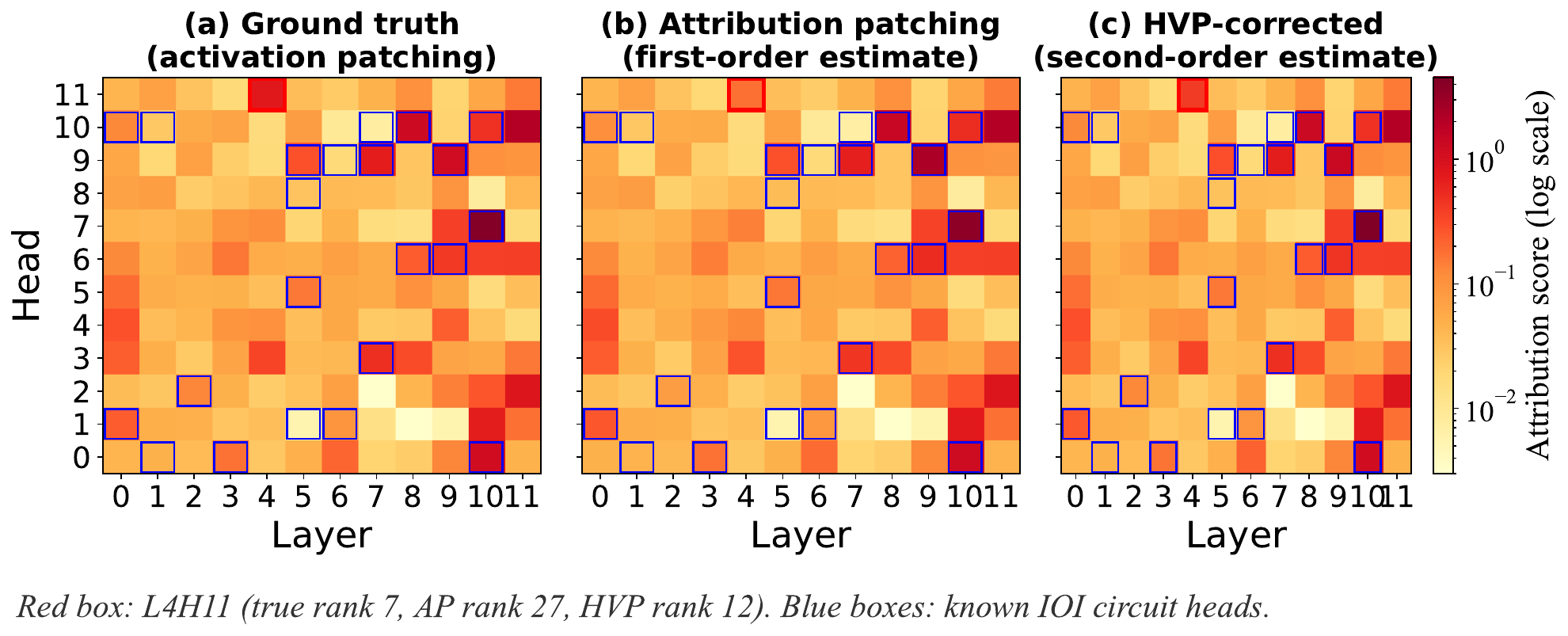}
\caption{Layer$\times$head attribution heatmaps for GPT-2 IOI (log scale).
\textbf{(a)}~Ground truth (activation patching).
\textbf{(b)}~Attribution patching (first-order).
\textbf{(c)}~HVP-corrected (second-order).
Red box: L4H11 (true rank~7, attribution-patching rank~27, HVP rank~12).
Blue boxes: known IOI circuit heads.}
\label{fig:heatmap_ioi}
\end{figure*}

Figure~\ref{fig:circuit_ioi} provides an alternative view, plotting only the
18 circuit heads grouped by functional role, with circle area proportional to
attribution score.  L4H11's circle is barely visible in the
attribution-patching panel but grows substantially in the HVP panel.

Finally, Figure~\ref{fig:attention_ioi} shows the attention patterns of three
key circuit heads on a representative IOI prompt
(\textit{``When John and Mary went to the store, Mary gave a drink to''}).
L10H7 (name mover) attends strongly to ``John'' (the indirect object) at the
prediction position; L9H9 (negative name mover) also attends to ``John'' but
with a negative contribution; and L4H11 (duplicate token) exhibits a
characteristic diagonal pattern, detecting repeated tokens.  The nonlinear
interaction between L4H11's duplicate-token detection and the name-swap
corruption explains why its attribution-patching estimate is so far from the
true activation-patching effect.
\begin{figure*}[t]
\centering
\includegraphics[width=\textwidth]{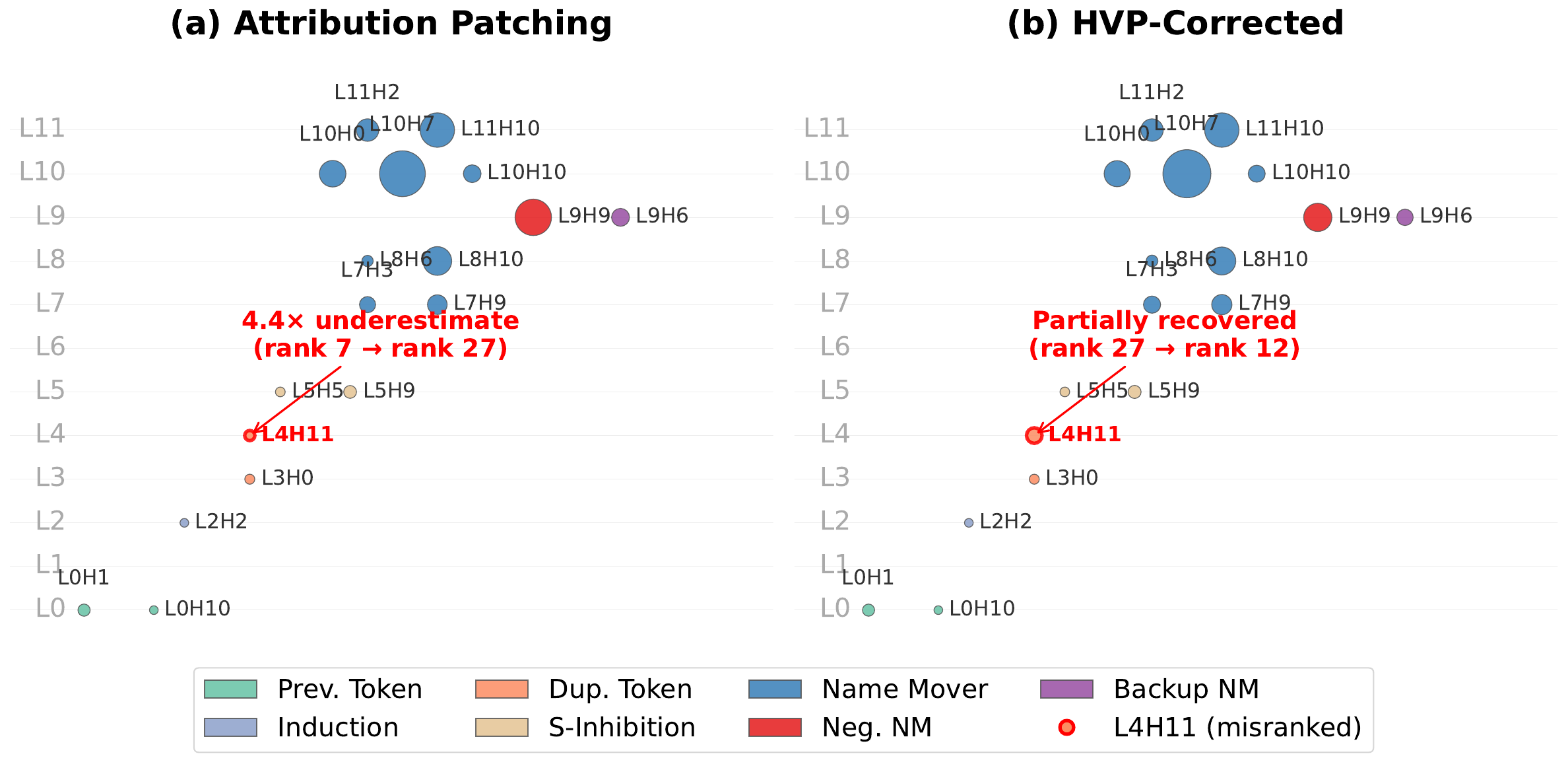}
\caption{IOI circuit heads grouped by functional role (GPT-2 Small).  Circle
area $\propto$ attribution score.  \textbf{(a)}~Attribution patching scores.
\textbf{(b)}~HVP-corrected scores.  L4H11 (red outline, duplicate-token head)
is severely underestimated by attribution patching ($0.17$ vs.\ true $0.77$)
and partially recovered by HVP ($0.40$).}
\label{fig:circuit_ioi}
\end{figure*}

\begin{figure*}[t]
\centering
\includegraphics[width=\textwidth]{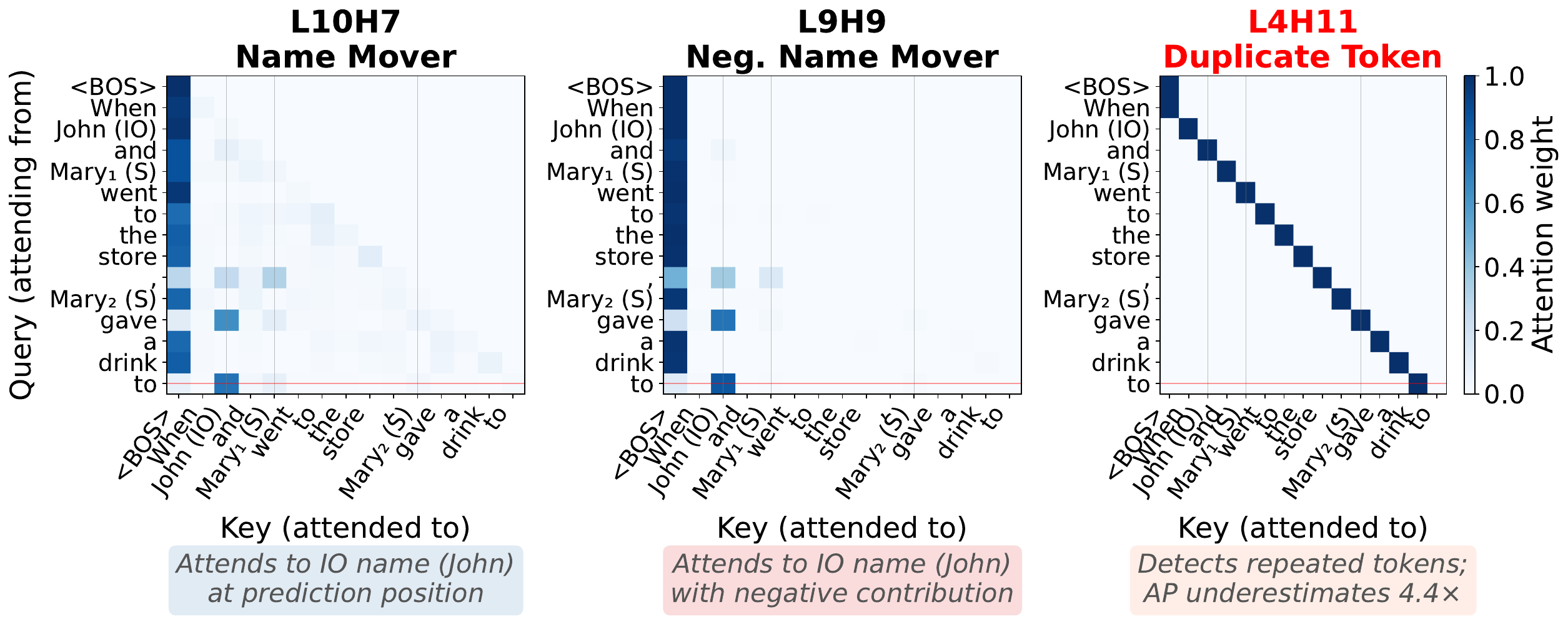}
\caption{Attention patterns of three key IOI circuit heads on a representative
prompt.  Each panel shows the full query$\times$key attention matrix; the red
horizontal line marks the prediction position (last token ``to'').
\textbf{Left:} L10H7 (name mover) attends 74\% to ``John'' (IO).
\textbf{Center:} L9H9 (negative name mover) attends 85\% to ``John''.
\textbf{Right:} L4H11 (duplicate token, red title) detects repeated tokens
via a diagonal pattern.  Attribution patching underestimates L4H11 by
$4.4\times$.}
\label{fig:attention_ioi}
\end{figure*}


\end{document}